\newcommand{\R}{\mathbb{R}}
\theoremstyle{definition}
\newtheorem{theorem}{Theorem}
\newtheorem{lemma}[]{Lemma}
\newtheorem{definition}[]{Definition}
\newtheorem{corollary}{Corollary}[theorem]
\theoremstyle{remark}
\numberwithin{equation}{section}
\begin{document}

\title[Hierarchical learning for noisy datasets]{Hierarchical regularization networks for sparsification based learning on noisy datasets}


\author{Prashant Shekhar}
\address{Data Intensive Studies Center, Tufts University, Medford, MA, 02155}
\curraddr{}
\email{prashant.shekhar@tufts.edu}
\thanks{}

\author{Abani Patra}
\address{Data Intensive Studies Center, Department of Mathematics, Department of Computer Science, Tufts University, Medford, MA, 02155}
\curraddr{}
\email{abani.patra@tufts.edu}
\thanks{This work was funded by the grants NSF1821311, NSF1645053, NSF1621853.}

\subjclass[2010]{Primary 68W25; Secondary 65D15, 33F05}

\date{June 9, 2020}

\dedicatory{}

\begin{abstract}
We propose a hierarchical learning strategy aimed at generating sparse representations and associated models for large noisy datasets. The hierarchy follows  from  approximation spaces identified at successively finer scales. For promoting model generalization at each scale, we also introduce a novel, projection based penalty operator across multiple dimension, using permutation operators for incorporating proximity and ordering information. The paper  presents a detailed analysis of approximation properties in the reconstruction Reproducing Kernel Hilbert Spaces (RKHS) with emphasis on optimality and consistency of predictions  and behavior of error functionals associated with the produced sparse representations. Results show the performance of the approach as a data reduction and modeling strategy on both synthetic (univariate and multivariate) and real datasets (time series). The sparse model for the test datasets, generated by the presented approach, is also shown to efficiently reconstruct the underlying process and preserve generalizability.
\end{abstract}

\maketitle

\section{Introduction}
Hierarchical learning traditionally involves a sequence of operations based on some hierarchy, for making useful inferences from data. Bayesian hierarchical models for example usually involve a hierarchy of three model classes, the data model, the process model and finally the parameter model \cite{berliner1996hierarchical,cressie2015statistics}. This forms a hierarchy for the updating scheme of the parameters as learning happens sequentially over time. Multiscale models also have an inbuilt hierarchy of approximations, and various research works try to make joint inference on data, by combining these model components in some intelligent fashion \cite{bermanis2013multiscale,floater1996multistep}. Hierarchical models also have parallels to deep learning models which implement sequential function compositions to learn a data generation mechanism \cite{1901.02220,goodfellow2016deep}.

Motivated by these  diverse applications, we present a hierarchical structure of competing regularization networks \cite{evgeniou2000regularization,poggio1990regularization,poggio1990networks}, that make inferences over the observed data. The chosen network has to satisfy the criteria of highest generalizable performance with least model complexity \cite{forster2000key}. The requirement of least complexity also allows for generation of a sparse representation for the dataset, making our approach suitable for data reduction problems \cite{czarnowski2018approach,ur2016big,yildirim2016parallel}. 
\footnote{ The code for the proposed approach is available online \url{https://github.com/pshekhar-tufts/Hierarchical_noisy.git}}Our approach introduces a scale parameter $s $ and defines a mapping between $s$ and the corresponding approximation space $\mathcal{H}_s$ in the hierarchy of spaces considered. The main idea of exploiting the inherent correlation structure in the data at multiple levels follows directly from \cite{1906.11426}. However, the notion of convergence used in \cite{1906.11426} fails if the observations are reported with sampling noise. We have addressed the problem of sparse modeling for such noisy datasets in a similar hierarchical setting.

\subsection{Problem setup and definition}
Let $Y = (y_1,y_2,..,y_n) \in \R^n$  are discrete data values observed at $X = (x_1, x_2,....,x_n) \in \R^{n \times d}$. Considering some true underlying process $f: \R^d \to \R$, the values in Y can be regarded as noisy versions of $f|_X$ ($y_i = f(x_i) + noise$). We further consider two additional sets. First set $\Omega_x$ contains the data points $x_i$ at which the observations were made ($x_i \in X \subset \Omega_x \subset \R^{d}$). The observations are samples from a second set, $ \Omega_y$ ($y_i \in Y \subset  \Omega_y \subset \R$). Now, for a fixed element of $\Omega_x$, we expect a probabilistic distribution on $ \Omega_y$. Hence a joint probability distribution $p(x,y)$ can be defined on $\Omega_x \times  \Omega_y$. Therefore our training data $D = \{(x_i,y_i) \in \Omega_x \times  \Omega_y\}_{i=1}^n$ can be thought of as a result of $n$ samples ($i.i.d$) from $\Omega_x \times  \Omega_y$ according to the distribution $p(x,y)$. 

Given such a random noisy data sample $D$, we propose a strategy for data reduction and learning through intelligent sparsification. Data reduction seeks to find a smaller sparse subset $X_s \subseteq X$, that is sufficient for providing acceptable approximations to the underlying process $f|_X$ while also generalizing predictions to unseen data points $x (\in \Omega_x) \not \in X$. The learning part is justified by the sparse model produced by the proposed approach, that exclusively uses the subset $X_s$ to make these predictions. Hence in essence, our approach makes the following transformation to the input data
\begin{equation}
    full\ dataset \Rightarrow sparse\ representation + sparse\ model
    \label{largesmall}
\end{equation}

Therefore, the proposed approach can be used to replace large noisy datasets with a smaller subset and an associated model that can be used to make all future predictions. The strategy may also be used to construct effective surrogates of complex computer models by sampling outputs. We note the strategy is provably good for prediction in the domain of observation.  

\subsection{Proposed solution framework}
Given such a problem setup,  we are required to learn a function $\hat{f} \in \mathcal{H}$ (native Reproducing Kernel Hilbert Spaces ( RKHS)) which is closest (within some measure) to being the underlying process generating observations Y at X. For dealing with the ill-posedness of the problem of fitting noisy data, additional smoothness constraints are applied. Thus we have the following variational problem as our objective

\begin{equation}\label{eq1}
\hat{f}= \operatorname*{arg\min_{\tilde{f} \in \mathcal{H}}} \Bigg[ \frac{1}{n} V(Y,\tilde{f}|_X)+ \lambda \cdot \zeta(\tilde{f}) \Bigg]
\end{equation}

Here $V(\cdot,\cdot)$ is a loss function and $\zeta(\tilde{f}) = ||Z\tilde{f}||^2_{\mathcal{H}}$ is called a stabilizer, where $Z$ usually is a differential operator and $||\cdot||_\mathcal{H}$ is the native RKHS norm. For example, if we make the following choices in 1-dimension

\begin{equation}\label{eq2}
V(Y,\tilde{f}|_X) = \sum_{i=1}^n(y_i -\tilde{f}_i)^2 \quad \text{and }\zeta(\tilde{f}) = ||Z \tilde{f}||^2 = \bigintsss_{\Omega_x} \Bigg[\frac{d^2 \tilde{f}(x)}{dx^2}\Bigg]^2dx
\end{equation}

then the function which minimizes (\ref{eq1}) is a spline \cite{wahba1990spline,green1993nonparametric}. Also $\lambda$ in (\ref{eq1}) is the regularization parameter which maintains a balance between approximation accuracy and smoothness. We obtain the classical ($L_2$) regularization network if we use squared error loss (as in (\ref{eq2})) in formulation (\ref{eq1}). \cite{poggio1990regularization} revealed this relationship between algorithms implementing regularization induced smoothness, with Multilayer Neural Networks. 

Work presented here proposes to extend the hierarchical algorithm from \cite{1906.11426} to noisy datasets by solving the variational problem (\ref{eq1}) at multiple scales (equivalent to fitting multiple competing regularization networks) and inferring the network (indexed by scale) that is most appropriately able to model the observations reported. The measure of `appropriateness' will be discussed in more detail in the following sections. Given the random sample of data $D = \{(x_i,y_i) \in \Omega_x \times  \Omega_y\}_{i=1}^n$, our approach considers a sequence of scale dependent RKHS $\mathcal{H}_s$, with an associated kernel $K^s: \R^d \times \R^d \rightarrow \R$, allowing (for each scale s) us to write a noisy data model of the form

\begin{equation}\label{model1}
Y = \mathcal{T}^sf + \varepsilon   
\end{equation}

Here $\varepsilon \sim N(0,\sigma^2_{\varepsilon}I) \in \R^n$ is a generic error term at each scale,  with function $f$ $ \in \mathcal{H}_s$ (assumed) being the true latent process to be inferred. $\mathcal{T}^s$ is an evaluation functional defined as $\mathcal{T}^sf = (f_1, f_2,...f_n)^T \in \R^n$. As evaluation functionals are bounded and linear in RKHS, therefore $\mathcal{T}^s \in \mathcal{B}(\mathcal{H}_s,\R^n)$. Hence given data $D$, our approach fits the model of type (\ref{model1}) by considering a sequence of scale dependent approximation spaces $\mathcal{H}_s$ (to infer $f \in \mathcal{H}_s$).  More specifics on $\mathcal{H}_s$ are provided in the subsequent sections. The scale $s$ with the $best\ approximation$ ($A_sf:\R^d \to \R$ where $A_sf \in \mathcal{H}_s$) to $f$ (among the discretized scales considered in the scale space) is then returned as the convergence scale ($t$).

While generating scale dependent models for the data, our hierarchical approach also creates a series of corresponding sparse subsets ($X_1, X_2,....,X_s,..$) which consist of $representative$ data points from X ($X_s \subseteq X$) \cite{chaudhuri1994finding,tejada2016selection} chosen intelligently by the algorithm. The cardinality (number of data points) of these subsets follow the relation
\[
|X_1| \leq |X_2| \leq .... \leq |X_s| \leq ..|X|\text{;} \quad \text{where $|\cdot|$ is the cardinality operator}
\]

Here, it should be noted that the the approximations $(A_sf): \R^d \to \R$ at each scale only use the datapoints in the corresponding sparse subset $X_s$. This enables efficient inference from a reduced version of the original dataset D and justifies the transformation in (\ref{largesmall}).

The scope of application of the ideas presented in this paper is  general   in both  problems targeted and proposed approach, with relations to many other research problems.  For example, multiresolution analysis provides one of the earliest references on multiscale processing of datasets \cite{mallat1989theory,daubechies1992ten}. There is also a rich literature on geometric data analysis with diffusion maps incorporating the ideas of multiscale analysis \cite{coifman2005geometric,coifman2006diffusion,maggioni2005biorthogonal}.
The hierarchy in our approximation spaces is closely related to Hierarchical Radial Basis Functions (HRBF) \cite{ferrari2004multiscale,borghese1998hierarchical}. These research works focus on combining models at multiple scales to appropriately capture an underlying process. This idea of multiscale basis functions also forms the foundation in more recent works like \cite{bermanis2013multiscale}, where the authors project the error orthogonal to the approximation space of previous scales to the next scale. This idea was also explored before by \cite{floater1996multistep}. For our problem, since we are targeting noisy data,  instead of combining the scales to reduce fitting error, we consider one scale at a time and incorporate an additional regularization parameter that promotes generalization. Since our approach generates data driven hierarchical basis functions belonging to RKHS, therefore the proposed approach is also related to work such as \cite{chen2013multi} and \cite{allard2012multi}, where the authors consider data dependent multiscale dictionaries that generalize wavelets in geometric sense. The physics based models have utilized the idea of multilevel modeling through multigrid methods \cite{briggs2000multigrid,stuben2001review}. There are many related papers in the general field of data analysis and machine learning (see for e.g. \cite{galun2015review,kushnir2009efficient} ) relating the idea to our approach. Since, the current work focuses on generating hierarchical basis functions, it is also closely related to works such as \cite{bohn2016sparse,griebel2014sparse} that implement the idea of sparse grids for data analysis and learning tasks.

\subsection{Contributions}
The principal contributions of this paper can be summarized as follows:
\begin{itemize}
    \item A hierarchical approach to data reduction and modeling using a sparse representation of the dataset is introduced. This enables us to replace a large noisy datasets with its sparse representation and an associated model for making any future predictions and generalizations. 
    \item The paper also proposes a novel type of smoothing penalty in multiple dimensions based on projections. This is achieved through a set of permutation operators for implementing localized penalties of varying degree.
    \item The paper also develops and presents theoretical foundations for the approximation and consistency properties of the proposed algorithm. This is followed by a detailed analysis of bounds on approximation operators and error in mean approximations.
\end{itemize}

\section{Hierarchical learning approach}
In our previous work \cite{1906.11426}, building on the work in \cite{bermanis2013multiscale}, we introduced and developed a methodology of data reduction (for noiseless data) through efficient basis construction exploiting the correlation structure present in the data. This algorithm was based on getting a relevant set of trial functions sampled as columns from a discrete kernel function. The scale at which these basis functions were able to efficiently approximate the observed data in the least square sense was considered as the convergence scale. The approach  constructed a sequence of scale (s) dependent approximations (represented as $(A_1f),(A_2f), (A_3f),..,(A_sf),...$) to the unknown function $f: \R^{d} \to \R$ by considering a hierarchy of approximation spaces $\mathcal{H}_s$. Each of these approximations used a subset of dataset $X_1, X_2,X_3,...X_s,.. \subseteq X$ respectively for learning. Since the data was noiseless ($f|_X$ was directly observed instead of $Y$), the target function was projected on the sampled basis at each scale by solving the following optimization problem

\begin{equation}\label{eq3}
A_sf =  \operatorname*{arg\min_{\tilde{f} \in \Gamma^s}} \Big[V(f|_X,\tilde{f}|_X)\Big] = \operatorname*{arg\min_{\tilde{f} \in \Gamma^s}} \Big[ ||f|_X - \tilde{f}|_X||^2_2  \Big]
\end{equation}

Here $\Gamma^s$ is the subspace defined at each scale s in the native RKHS as 
\begin{equation}\label{eq4}
\Gamma^s = span\{K^s(.,x_i): x_i \in X_s\} \approx span\{K^s(.,x_j): x_j \in X\} \quad X_s \subseteq X
\end{equation}

with $K^s(\cdot,\cdot)$ being the reproducing kernel for the RKHS $\mathcal{H}_s$ \cite{buhmann2003radial,wendland2004scattered,aronszajn1950theory} and formulation (\ref{eq3}) being the standard problem of orthogonal projection \cite{saad2003iterative}.

In the this paper we extend this idea to noisy datasets, where models cannot rely completely on the observations (as they are corrupted with noise). So we ameliorate the effect of noise by introducing a penalty function for inducing smoothness (under the common assumption that noise induces false rapid fluctuations \cite{wahba1990spline}) thus obtaining the following constrained projection formulation (same as the $L_2$ regularization network functional as in (\ref{eq1})). 

\begin{equation}\label{eq5}
A_sf =  \operatorname*{arg\min_{\tilde{f} \in {\Gamma^s}}} \Bigg[ \frac{1}{n} ||Y - \mathcal{T}^s \tilde{f}||^2_2 + \lambda_s ||J_s \tilde{f}||^2_{\mathcal{H}_s}  \Bigg]\
\end{equation}

Here $J_s$ is a suitable projection operator on $[\Gamma^s]$ (a particular choice of $\zeta(\cdot)$) which allows efficient penalization (regularization) of sharp changes in $\tilde{f}$. $\mathcal{T}^s$ is a evaluation functional defined in (\ref{model1}). The solution to (\ref{eq5}) has a form $A_sf = \sum_{i = 1}^{|X_s|}\hat{\theta}_i K^s(\cdot,x_i)$ (from  \cite{pearce2006penalized}, $x_i \in X_s$), with $\hat{\theta}_i$ being suitable basis weights minimizing the cost objective \ref{eq5} and $K^s$ being the reproducing kernel for $\mathcal{H}_s$.

\subsection{Regularization structure}
Following standard procedures in kernel based approximation methods \cite{pearce2006penalized, wahba1990spline}, it is often desirable to only penalize certain specific functions in $\mathcal{H}_s$ and keep the rest of the functions unpenalized (which is achieved precisely by the projection operator $J_s$ in (\ref{eq5})). Let $\mathcal{H}_{s,0} = span\{\psi_0, \psi_1,....,\psi_{p_1}\}$ be a subspace of $\mathcal{H}_s$ containing these unpenalized functions with its orthogonal complement $\mathcal{H}_{s,1}$ $(\mathcal{H}_{s,1} = \mathcal{H}^{\perp}_{s,0}= span\{\phi_0, \phi_1,....,\phi_{p_2}\})$ spanned by the functions whose behavior needs to be constrained. Therefore $\mathcal{H}_s = \mathcal{H}_{s,0} \bigoplus  \mathcal{H}_{s,1}$ (also $p_1 + p_2 = |X_s|$). Coming back to (\ref{eq5}), we conclude that a suitable $J_s$ has $\mathcal{H}_{s,0}$ as its null space with  $\mathcal{H}_{s,1}$ being its projection or range space. \cite{aronszajn1950theory} also showed that $\mathcal{H}_{s,0}$ and $\mathcal{H}_{s,1}$ are themselves valid RKHS with suitable Kernels $K^s_0$ and $K^s_1$ respectively such that $K^s = K^s_0 + K^s_1$. The projection operator $J_s$ can take various forms \cite{green1993nonparametric,eilers1996flexible,tibshirani2015statistical},  however for our hierarchical approach we have chosen to implement a difference operator based penalty on the projections across each dimension (similar to the one used by \cite{eilers1996flexible}). For better understanding of the penalty operator, consider a Relation R ( $\leq$: less-than-or-equal) \cite{oden2017applied} defined on the domain set $\Omega_x \subset \mathbb{R}$ (univariate approximation) such that $\Omega_x$ is partially ordered by R. Therefore corresponding to each $x \in \Omega_x$, we can define a function $K^s(\cdot,x)$ and associate a weight $\theta^x$ with it, making weights a function of the continuous variable x ($\theta^x$ is used in the penalty definition in (\ref{norm1}) and (\ref{norm2})). Now considering the discrete case and  applying the same ordering R on  $\Theta^s_{qr} =\{ \theta^{x_i} | x_i \in X_s\}$, represented as $\Theta^{s}_{x} = Pe^s_x \Theta^s_{qr}$. Here $Pe^s_x$ is the permutation operator at scale s in the x-direction (enforcing relation R) and $\Theta^s_{qr}$ is the set of coordinates for the bases set spanning the approximation space $\Gamma^s$. The initial ordering of $\theta^{x_i} \in \Theta^s_{qr}$ is determined by the ordering of the corresponding basis functions in the bases set. In the current research we implement the penalization of sharp changes by constraining the behavior of basis functions at data points ($x_i$) in close proximity (as per the ordering induced by R) to vary in a smooth manner. This is achieved by constraining the rate of change of the weights of these basis functions. Thus for a univariate function $\tilde{f} = B^s \Theta^s_{qr}$ (where $B^s$ is the spanning basis for $\Gamma^s$), we consider the following proxies for the first and second order derivative based penalties.

\begin{equation}\label{norm1}
\zeta{(\tilde{f})}_{q =1} = \bigintsss_\Omega_x \Bigg[\frac{d \theta^{x}}{dx}\Bigg]^2dx \approx ||D^1 \Theta^{s}_{x}||^2 = {\Theta^s_{qr}}^T {Pe^s_x}^T {D^1}^T{D^1}Pe^s_x \Theta^s_{qr}
\end{equation}

\begin{equation}\label{norm2}
\zeta{(\tilde{f})}_{q =2} =  \bigintsss_\Omega_x \Bigg[\frac{d^2 \theta^{x}}{dx^2}\Bigg]^2dx \approx ||D^2 \Theta^{s}_{x}||^2 = {\Theta^s_{qr}}^T {Pe^s_x}^T {D^2}^T{D^2} Pe^s_x \Theta^s_{qr}
\end{equation}

Here $D^q$ is a difference operator of order q on $\Theta_x^{s}$. Beginning with the difference operator for individual $\theta_i \in \Theta_x^{s}$ (represented as $\Delta^q$ for $q^{th}$ order penalty) we have

\[\Delta^1 \theta_i = \theta_i - \theta_{i-1}\]
\[\Delta^2 \theta_i  = \Delta^1(\Delta^1 \theta_i) = \theta_i - 2 \theta_{i-1} + \theta_{i-2}\] 
\[\vdots\]
\[\Delta^q \theta_i = \Delta^1 (\Delta^{q-1} \theta_i ) \]

And in matrix form, $\Delta^q$ represented as $D^q$ can be expressed as follows (considering 5 basis functions and q = 1, 2 respectively as example)

\[D^1 = \begin{pmatrix} -1 &\phantom{-}1& \phantom{-}0& \phantom{-}0& \phantom{-}0 \phantom{-}\\\phantom{-}0 & -1 & \phantom{-}1 & \phantom{-}0 & \phantom{-}0 \phantom{-}\\  \phantom{-}0 & \phantom{-}0 & -1 & \phantom{-}1 & \phantom{-}0 \phantom{-}\\ \phantom{-}0& \phantom{-}0& \phantom{-}0& -1 &\phantom{-}1\phantom{-} \end{pmatrix} \quad D^2 = \begin{pmatrix}\phantom{-}1& -2 &\phantom{-}1 &\phantom{-}0 &\phantom{-}0 \phantom{-}\\ \phantom{-}0& \phantom{-}1& -2& \phantom{-}1& \phantom{-}0\phantom{-}\\ \phantom{-}0& \phantom{-}0 &\phantom{-}1 &-2& \phantom{-}1\phantom{-} \end{pmatrix}\]

Based on the requirement, it is straightforward to come up difference operators for higher order penalties ($D^q$ for $q > 2$). It should be noted that (\ref{norm1}) and (\ref{norm2}) indeed define a seminorm on the space $\mathcal{H}_s$, again confirming the fact that these norms are evaluated in some subspace of $\mathcal{H}_s$ (just penalizing the projection in the subspace $\mathcal{H}_{s,1}$).

Coming back to problem (\ref{eq5}), the loss function and the stabilizing operator can be represented as 

\begin{equation}\label{eq6}
V = ||Y - \mathcal{T}^s\tilde{f}||^2_2 =   ||Y -B^s \Theta^s_{qr}||^2_2
\end{equation}

\begin{equation}\label{eq8}
\zeta(\tilde{f})_q = {\Theta^s_{qr}}^T {Pe^s_x}^T {D^q}^T{D^q} Pe^s_x \Theta^s_{qr}
\end{equation}

Now putting (\ref{eq6}) and (\ref{eq8}) in (\ref{eq5}) leads to the following modified formulation for univariate approximations
 
 \begin{equation}\label{eq9}
\operatorname*{\min_{\substack{\Theta^s_{qr} \in {\R^{|X_s|}} }} } \Bigg[ \frac{1}{n} ||Y -B^s \Theta^s_{qr}||^2_2 + \lambda_s {\Theta^s_{qr}}^T {Pe^s_x}^T {D^q}^T{D^q} Pe^s_x \Theta^s_{qr} \Bigg]\
 \end{equation}
 
\begin{figure}
\centering
\includegraphics[width=4.5in]{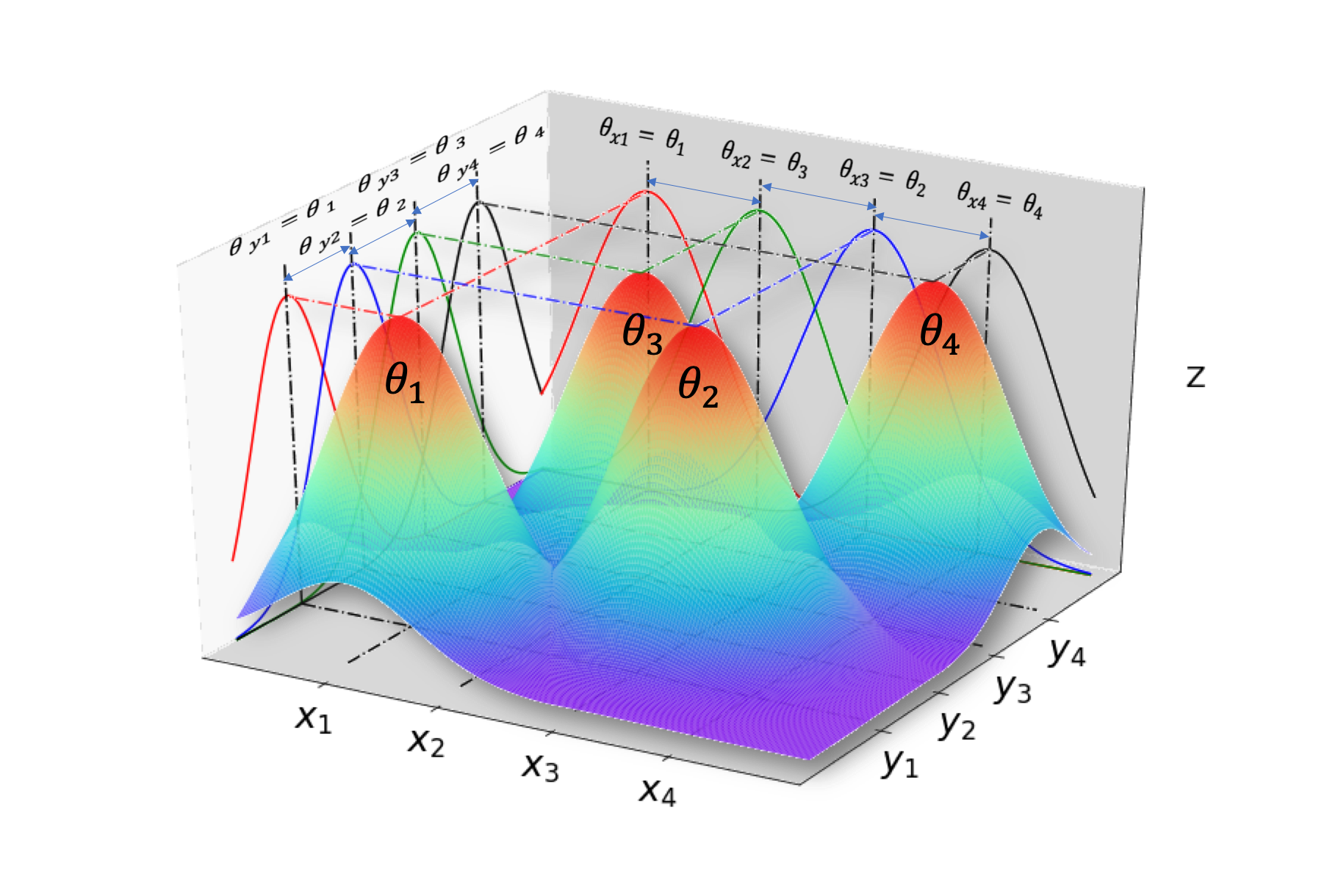}
\caption{Nature of penalty for 2-D basis functions imposed by projection on the corresponding dimensions and application of a permutation operator}
\label{img1}
\end{figure}

For modeling in higher dimensions, we put independent penalties in each dimension in a similar way as before. Let $\Theta^{s}_{i}$ is the ordering of the weight vector as per the Relation $\leq$ on coordinates in the $i^{th}$ dimension and $Pe^s_i$ is the corresponding permutation operator which transforms $\Theta^s_{qr}$ ($\Theta_{i}^{s} = Pe^s_{i}\Theta^s_{qr}$). Also let $Q = [q_1,q_2,...q_d]$ be the vector of order of penalties across each of the dimensions (for $\R^d$) with $\Lambda_s = [\lambda_s^1, \lambda_s^2,...\lambda_s^d]$ being the set of corresponding regularization parameters. Therefore multidimensional penalty operator ($\mathcal{P}^Q_s $) has the representation

\begin{equation}
    \mathcal{P}^Q_s = \sum_{i = 1}^d \lambda^i_s \Psi_s^{q_i} \quad \text{ where }\quad \Psi_s^{q_i} =  {Pe^s_i}^T {D^{q_i}}^T{D^{q_i}} Pe^s_i
    \label{ppenalty}
\end{equation}

For illustrating the penalty structure, we have presented a test case in Figure \ref{img1}. Here we have the X-Y plane as the approximation domain. Assuming at any scale s, $\Theta^s_{qr} = [\theta_1, \theta_2, \theta_3, \theta_4]$. Depending on the location of these basis function (in the data space), we have the following permutation operators

\[Pe_x^s = \begin{bmatrix} 1 & 0 & 0& 0 \\0 & 0 & 1 & 0 \\  0 & 1 & 0 & 0 \\ 0& 0& 0& 1 \end{bmatrix} \quad Pe_y^s = \begin{bmatrix} 1 & 0 & 0& 0 \\0 & 1 & 0 & 0 \\  0 & 0 & 1 & 0 \\ 0& 0& 0& 1 \end{bmatrix}
\]
giving us $\Theta_x^{s} = [\theta_1, \theta_3, \theta_2, \theta_4]$ and $\Theta_y^{s} = [\theta_1, \theta_2, \theta_3, \theta_4]$ (coefficients according to the ordering R ($\leq$) as described before).

Hence we have the following analogous problem formulation to (\ref{eq5}) for the L-2 regularization network in higher dimensions

\begin{equation}\label{l2high}
A_sf =  \operatorname*{arg\min_{\tilde{f} \in {\Gamma^s}}} \Bigg[ \frac{1}{n} ||Y - \mathcal{T}^s \tilde{f}||^2_2 + \sum_{i=i}^d\lambda^i_s ||J^i_s \tilde{f}||^2_{\mathcal{H}_s}  \Bigg]\
\end{equation}

and rewriting it with basis and penalty operators, we obtain the following regularization network problem.

\begin{algorithm}
\caption{Main Algorithm}
\label{multiscale_algo}
\begin{algorithmic}[1]
\STATE{\textbf{INPUT}:\\
\quad \textit{Parameters}: $(T>0, M > 1) \in \R^2$\\
\quad \textit{Dataset}: $D = \{(x_i,y_i) \in \Omega_x \times  \Omega_y\}_{i=1}^n$ \\
\quad \textit{Prediction points}: $X_m \subset \Omega_x$}
\STATE{\textbf{OUTPUT}:\\ 
\quad \textit{Convergence length scale}: $\epsilon_t \in \R$\\
\quad \textit{Sparse model}: $X_{t} \subseteq X,C_{t} \in \R^{|X_{t}|}$\\
\quad \textit{Sparse representation}: $D_t = (X_{t},Y_t) \subseteq D$\\
\quad \textit{Predictions at $X_m$}: $P_m \in \R^{m}, {std_m} \in \R^{m}$}\\
\noindent\rule{11.9cm}{0.4pt}
\STATE{Initialize: $s =0, l_s = 0, T_h = 0$}
\WHILE{{$l_s < n$}}
\STATE{Compute covariance kernel: $G_s$ on $X$ with $\epsilon_s = T/M^s$}
\STATE{Update numerical rank for current scale: $l_s = rank(G_s)$}
\STATE{Remove sampling bias: $W = AG_{s}$ with $A = [a_{i,j}] \in \R^{k \times n}$( $a_{i,j} \sim \mathcal{N}(0,1))$}
\STATE{Generate permutation information: $WP_{qr} = QR$}
\STATE{Produce sparse representation and corresponding bases: $(X_s,Y_s)$ and $B^s$}
\STATE{$[\hat{\Lambda}_{s},\hat{Q}, Cost_s] \leftarrow GCV\_model\_evaluate(B^s,D)$} (illustrated in \ref{costgcv})
\STATE{Compute the optimal weights: $\hat{\Theta}_{qr}^s$ from (\ref{coorfinal})}
\STATE{\textbf{if} $s == 0\ \textbf{or}\ Cost_s < T_h$ : }
\STATE{\quad $[t,\epsilon_t,X_{t},Y_t,C_t,\Lambda_{t},Q_t,T_h]  \leftarrow [s,\epsilon_s,X_s,Y_s,\hat{\Theta}_{qr}^s,\hat{\Lambda}_{s},\hat{Q},Cost_s]$}
\STATE{Update scale: $s=s+1$}
\ENDWHILE
\STATE{$P_m \leftarrow Predict\_mean (\epsilon_t,X_{t},C_{t},X_m)$ (Algorithm \ref{multiscale_algo2})}
\STATE{$std_m \leftarrow Predict\_CI (\epsilon_t,X_{t},C_t,X_m,\Lambda_{t},Q_t,D)$ (Algorithm \ref{multiscale_algo3})}
\STATE{\textbf{return} [$\epsilon_t, (X_{t},C_{t}),(X_t,Y_t),P_m,std_m$]}
\end{algorithmic}
\end{algorithm}

\begin{equation}\label{highd}
\operatorname*{\min_{\substack{\Theta^s_{qr} \in {\R^{|X_s|}} }} } \Bigg[ \frac{1}{n} ||Y -B^s \Theta^s_{qr}||^2_2 +  {\Theta^s_{qr}}^T \mathcal{P}^Q_s{\Theta^s_{qr}}\Bigg]
\end{equation}

\subsection{Fitting the regularization network at multiple scales}
The theory of regularization networks has been developed closely in relation to the Vapnik's ideas on statistical learning theory \cite{vapnik2013nature}. If we have a finite set of training data, then the approximation has to be constrained to a small hypothesis space ($\Gamma^s$). This concept has been formalized through the capacity of a set and controlling its capacity for proper generalizable approximations. This implementation of \textit{capacity control} exactly corresponds to finding the optimal $\Lambda_s$ for a justified trade-off. In this research, we implement and analyze the performance of Generalized Cross-Validation ($GCV$) for evaluating the performance (quality) of the model at a particular scale. The scale with the minimum optimized GCV metric is regarded as the convergence scale \cite{ruppert2003semiparametric} and the corresponding regularization network is declared as the winner and the most suitable for modeling the given dataset D

Working with the regularization problem (\ref{highd}), if we differentiate the cost function with respect to $\Theta^s_{qr}$, we obtain the normal equations

 \begin{equation}\label{normmm}
 \Bigg[\frac{1}{n} {B^s}^T B^s +\mathcal{P}^Q_s\Bigg]\hat{\Theta}^s_{qr} = \frac{1}{n} {B^s}^T Y
 \end{equation}
 
giving us the $\hat{\Theta}^s_{qr}$ as a function of hyperparameters $\Lambda_s = [\lambda_s^1, \lambda_s^2,...\lambda_s^d]$

 \begin{equation}\label{coorfinal}
\hat{\Theta}^s_{qr}(\hat{\Lambda}_s) =   \Big[{B^s}^T B^s + n \widehat{\mathcal{P}^Q_s}\Big]^{-1}{B^s}^T Y
 \end{equation}
 
Here $\widehat{\mathcal{P}^Q_s}$ represents the estimated penalty operator $\mathcal{P}^Q_s$ (\ref{ppenalty}) after substituting optimal hyperparameters $\Lambda_s$(represented as $\hat{\Lambda_s}$). Therefore, the whole objective of model fitting on the dataset reduces  to choosing the right $\Lambda_s$(hyperparameters quantifying regularization along each dimension). Moving forward, we discuss the main algorithm which precisely does this for all the competing, scale dependent regularization networks and chooses the one with the highest generalizable performance. If two scales have the same model fitting cost, then the one with less complexity is chosen (less number of data points in the sparse set $X_s$).

Our approach (Algorithm \ref{multiscale_algo}), takes a dataset, where a data point $x_i \in \R^d$ is mapped to an observed value $y_i \in \R$.  In matrix form $Y= (y_1,y_2,....y_n)$ values are obtained at data points $X = \{x_1, x_2,....,x_n\}$ ($Y \in \R^n$ and $X \in \R^{n \times d}$). The scalars $[T, M]\in \R^2$ are the algorithmic hyperparameters defined by the user. These choices inform the structure of the positive definite function ($K: \R^d \times \R^d \to \R$) used in the algorithm. Here we work with the squared exponential kernel (\ref{sqexp}) \cite{rasmussen2004gaussian} for mapping the covariance structure and generating the space of trial functions $\Gamma^s$ (\ref{eq4}) at each scale s. 

\begin{equation}
G_s (x_i,x_j) = \exp\left({-\frac{||x_i-x_j ||^2}{\epsilon_s}}\right) \mbox{ ;  }  \epsilon_s=\frac{T}{M^s} 
\label{sqexp}
\end{equation}

Here $\epsilon_s$ is the length scale parameter determining the  support of the basis set at scale s. M is assumed to be 2 (Based on \cite{bermanis2013multiscale}). This choice of M reduces the length scale of the kernel
($G_s$) by a factor of 0.5 at each scale increment, providing an intuitive understanding of how
the support of basis functions is adapted to scale variation. Furthermore, if we assume the diameter of the dataset to be distance between the most distant pair of datapoints, then T is given by

\begin{equation}
T = 2 (Diameter(X)/2)^2
\end{equation}

Besides these parameters, the algorithm also accepts $X_m = (x_i,x_2,...,x_m) \subset \Omega_x \subset \R^{d}$, which represent the data points at which the user wants to predict the underlying function.

In this section we explain how we infer the convergence scale ($t$) and the sparse set $X_t$. The final prediction at the convergence scale will be explained in detail in the following section. Given the Dataset $D = \{(x_i, y_i) \in \Omega_x \times \Omega_y\}_{i=1}^n$, Algorithm \ref{multiscale_algo} begins with the computation of the covariance operator $G_s$ (\ref{sqexp}). However, based on research such as \cite{de2010stability,fasshauer2009preconditioning} , the distribution of the dataset might lead to ill-conditioning of this covariance kernel. Therefore we carry out a column pivoted QR decomposition to identify the space $\Gamma^s$ (at each scale) which approximates the span of the trial functions $K^s(\cdot, x_j), [1 \leq j \leq n]$ at scale s (\ref{eq4}). The QR decomposition is carried out on W (instead of $G_s$ directly) for obtaining the Permutation matrix $P_{qr}$. $W$ is produced by the product of a random normal matrix $A$ with the $G_s$. Here we have $A \in \R^{k \times n}$ with $(l_s = rank(G_s)) \leq k \leq n$. For our experiments we have assumed $k = l_s+8$ (as in \cite{bermanis2013multiscale}), which means we sample 8 additonal rows to account for numerical round-offs during the QR decomposition. The permutation matrix $P_{qr}$ produced by the decomposition captures the  information content of each column of W. $P_{qr}$ is then used to extract independent columns with the biggest norm contributions (forming the bases set $B^s$) along with the observation points ($X_s$) these columns correspond to in the covariance kernel $G_s$. The ordering of basis functions in $B^s$ (governed by $P_{qr}$ and representing the information content in decreasing order) determine the ordering of $\theta^{x_i} \in \Theta^s_{qr}$ (here $x_i \in X_s$). The dimension of the bases comes from the numerical rank ($l_s$) of $G_s$ estimated by strategies such as a $Rank\ Revealing-QR$ or a $SVD$ decomposition. Finally $GCV\_model\_evaluate$ subroutine is called which fits the regularization network at the current scale. In essence we follow the ideas from \cite{wahba1990spline} for solving a penalized objective of the form (\ref{highd}), and thus minimize the Generalized Cross Validation metric which is given as

\begin{equation}
GCV_s (\Lambda_s) = \frac{1}{n} ||(I - U(\Lambda_s))Y||^2 / \Big[\frac{1}{n}Tr(I - U(\Lambda_s)) \Big]^2
\end{equation}

where $U(\Lambda_s)$ is the influence matrix satisfying

\begin{equation}
\mathcal{T}^s(A_sf) = U(\Lambda_s)Y 
\end{equation}

\begin{equation}
\label{ulam}
   \text{Thus   }\quad  U(\Lambda_s) = B^s\Big[{B^s}^T B^s + n \mathcal{P}^Q_s\Big]^{-1}{B^s}^T 
\end{equation}

Here the objective is to find the optimal penalty vector $\Lambda_s$. However, besides the regularization parameters ($\lambda^i_s$), we also have to find a suitable penalty order across each dimension $Q = [q_1, q_2,...,q_d]$. So, for every dimension $i$, we just consider $q_i$ = 1 and 2 (higher order penalties were found to oversmooth approximations weakening the local structure), and choose the final penalty vector Q (composed of either $1^{st}$ or $2^{nd}$ order penalties across each dimension), that lead to a overall smallest $GCV_s(\Lambda_s)$. Hence, in essence we are solving the following formulation:

\begin{equation}\label{costgcv}
Cost_s =\operatorname*{ \min_{\substack{{\Lambda_s>0}\\{Q|q_i \in \{1,2\}}}}} GCV_s, \text{  with  } [\hat{\Lambda}_{s},\hat{Q}] = arg\operatorname*{ \min_{\substack{{\Lambda_s>0}\\{Q|q_i \in \{1,2\}}}}}  GCV_s
\end{equation}
$GCV\_model\_evaluate$ from Algorithm \ref{multiscale_algo} implements this optimization problem. Here $\Lambda_s >0$ refers to $\lambda_s^i >0\  \forall i$

Therefore, when  Algorithm \ref{multiscale_algo} exits the $while$ loop (after covariance kernel becomes numerically full rank), we obtain the convergence scale $t$ (the scale with the minimum $Cost_s$ (\ref{costgcv})), the sparse set $X_{t}$ and corresponding coordinate of projection $C_{t}$ ($C_{t}$ is same as $\Theta^s_{qr}$ at optimal scale $s = t$  in  (\ref{coorfinal})). \textbf{Thus, we have the sparse representation $D_t = (X_t,Y_t)$ and the sparse model $(X_t,C_t)$ for dataset D}. 

One additional thing to discuss  in Algorithm \ref{multiscale_algo} (before we move on to the $Predict\_mean()$ and $Predict\_CI()$ functions in Algorithm \ref{multiscale_algo2} and \ref{multiscale_algo3} respectively) is the termination condition for the $while$ loop. For that we provide the following result

\begin{theorem}
The number of while loop iterations for Algorithm \ref{multiscale_algo} are finite and grow with data size n at $\mathcal{O}(log_2(n))$
\end{theorem}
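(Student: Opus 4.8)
The plan is to analyze the termination condition of the \textbf{while} loop, which is the predicate $l_s < n$, where $l_s = \operatorname{rank}(G_s)$ and $\epsilon_s = T/M^s = T/2^s$. Since the loop exits precisely when $G_s$ becomes numerically full rank, the number of iterations equals the smallest $s$ for which $\operatorname{rank}(G_s) = n$. So the entire argument reduces to controlling how the numerical rank of the Gaussian kernel matrix $G_s(x_i,x_j) = \exp(-\|x_i-x_j\|^2/\epsilon_s)$ grows as the length scale $\epsilon_s$ shrinks geometrically in $s$.

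First I would record two monotonicity/limiting facts: (i) as $\epsilon_s \to 0$, $G_s \to I$, which is full rank $n$, so the loop does terminate; (ii) it suffices to exhibit \emph{one} threshold value $\epsilon^\ast > 0$, depending only on the geometry of the fixed point set $X$ (e.g. on the separation distance $q_X = \min_{i\neq j}\|x_i-x_j\|$), such that $\epsilon_s \le \epsilon^\ast$ forces $G_s$ to be (numerically) full rank. Then the loop runs until $T/2^s \le \epsilon^\ast$, i.e. until $s \ge \log_2(T/\epsilon^\ast)$, giving $s = \lceil \log_2(T/\epsilon^\ast)\rceil$ iterations. The key step is then to show $\epsilon^\ast$ can be taken proportional to $q_X^2$, and that for $n$ points drawn i.i.d.\ from a fixed bounded domain $\Omega_x \subset \R^d$ one has $q_X^2 = \Theta(n^{-2/d})$ (or more conservatively $q_X^2 \gtrsim n^{-c}$ for some constant $c$), together with $T = 2(\operatorname{Diameter}(X)/2)^2 = O(1)$; substituting gives $s = O(\log_2(n^{2/d})) = O(\log_2 n)$, absorbing the dimension-dependent constant into the big-$\mathcal{O}$.

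For the threshold estimate I would invoke the standard theory of conditioning of Gaussian interpolation matrices: when $\sqrt{\epsilon_s}$ is small relative to the separation distance $q_X$, the matrix $G_s$ is diagonally dominant (off-diagonal entries are bounded by $\exp(-q_X^2/\epsilon_s)$, and there are at most $n-1$ of them per row, so row sums of off-diagonals are $\le (n-1)\exp(-q_X^2/\epsilon_s)$, which is $< 1$ as soon as $\epsilon_s < q_X^2/\log(n-1)$). Diagonal dominance gives invertibility, and in fact a Gershgorin bound on the smallest eigenvalue, hence numerical full rank for any fixed tolerance once $\epsilon_s \lesssim q_X^2/\log n$. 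The extra $\log n$ in the denominator only changes $s$ by an additive $\log_2\log n$ term, still $\mathcal{O}(\log_2 n)$. To make the lower bound $q_X \gtrsim n^{-1/d}$ rigorous for random samples one can either assume the sampling density is bounded away from zero on $\Omega_x$ and use a union bound over the $\binom{n}{2}$ pairs (the probability that any two of $n$ i.i.d.\ points lie within distance $r$ is $O(n^2 r^d)$, so $q_X \ge c\, n^{-2/d}$ with high probability, which still yields $\mathcal{O}(\log_2 n)$), or state the result for quasi-uniform / well-separated designs.

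The main obstacle I anticipate is making the phrase ``numerical rank'' precise and tying the deterministic conditioning estimate to the randomness of $X$: one must fix a rank tolerance, show that $\epsilon_s$ below an explicit multiple of $q_X^2$ (up to log factors) guarantees all $n$ singular values exceed that tolerance, and then argue that $q_X$ does not decay faster than polynomially in $n$ under the i.i.d.\ sampling model. Everything after that — solving $T/2^s \le \epsilon^\ast(n)$ for $s$ and taking logarithms — is routine and produces the claimed $\mathcal{O}(\log_2 n)$ bound; the finiteness is immediate from $G_s \to I$. I would present the diagonal-dominance threshold as the crux and relegate the probabilistic separation-distance bound to a remark or a cited fact about random point configurations.
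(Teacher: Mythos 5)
Your proposal is correct in substance but follows a genuinely different route from the paper. The paper leans on the numerical-rank estimates for Gaussian kernel matrices from Bermanis et al.: with precision $\phi$, the rank satisfies $l_s^\phi(G_s)\leq \prod_{i=1}^d\bigl(\tfrac{2|V_i|}{\pi}\sqrt{\epsilon_s^{-1}\ln(\phi^{-1})}+1\bigr)$, so $l_s^\phi\propto \epsilon_s^{-d/2}$, i.e.\ it grows geometrically in $s$ (since $\epsilon_s=T/2^s$) until full rank is reached, which yields the $\mathcal{O}(\log_2 n)$ iteration count directly from the bounding-box geometry and a cited proposition, with no probabilistic input. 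You instead prove termination by exhibiting a threshold $\epsilon^\ast\lesssim q_X^2/\log n$ below which $G_s$ is strictly diagonally dominant (Gershgorin then controls the smallest-to-largest singular value ratio for a fixed rank tolerance), and you convert this to $s=\mathcal{O}(\log_2 n)$ via a separation-distance bound for i.i.d.\ samples. This is more self-contained and, arguably, addresses the termination direction more directly than the paper's argument (an upper bound on rank by itself only shows the loop must run at least $\Omega(\log_2 n)$ scales; the paper's termination rests on the proportionality claim), but it buys this at the cost of a probabilistic, with-high-probability statement and an extra regularity hypothesis on the design. Two small repairs: for the union bound giving $q_X\gtrsim n^{-2/d}$ you need the sampling density bounded \emph{above} (so points cannot cluster), not bounded away from zero as you wrote; and you should state explicitly that the $x_i$ are pairwise distinct (no atoms in the sampling distribution), since duplicated points keep $G_s$ rank-deficient at every scale and the loop would never exit — an edge case the paper's deterministic argument also tacitly excludes. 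With those caveats your chain $\epsilon_s\leq \epsilon^\ast \Rightarrow s\geq \log_2(T/\epsilon^\ast)$, $T=\mathcal{O}(1)$, $\epsilon^\ast\gtrsim n^{-4/d}/\log n$ gives the claimed $\mathcal{O}(\log_2 n)$, and finiteness follows from $G_s\to I$.
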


\begin{proof}
Following the work of \cite{bermanis2013multiscale} , if $\phi$ represents the precision of rank for the Gaussian kernel matrix, then we can define its numerical rank as

\begin{equation}
 l_s^\phi (G_s) = \#\Bigg(j: \frac{\sigma_j (G_s)}{\sigma_0 (G_s)} \geq \phi \Bigg)
\end{equation}

where $\sigma_j(G_s)$ is the $j^{th}$ largest singular value of $G_s$. Also if we assume $|V_i|$ represents the length of the bounding box of the data in $i^{th}$ ($i \in [1,d]$) dimension, then given the length scale parameter $\epsilon_s$, the rank of the Gaussian kernel can be bounded above as 

\begin{equation}
l_s^\phi (G_s) \leq \prod_{i = 1}^d \Bigg( \frac{2 |V_i|}{\pi} \sqrt{\epsilon_s^{-1}ln(\phi^{-1})} + 1 \Bigg)
\end{equation} 

Then using proposition 3.7 in \cite{bermanis2013multiscale}, we recall the fact that numerical rank of the gaussian kernel matrix is proportional to the volume of the minimum bounding box $Vol = V_1 \times V_2 \times ....\times V_d$ and to  $\epsilon_s^{-d/2}$ . Therefore for a fixed data distribution, following relation holds

\begin{equation}\label{r_del}
 l_s^\phi (G_s) \propto \epsilon_s^{-d/2}  \propto 2^{sd}
\end{equation}

Hence numerical rank ($l_s$) of $G_s$ increases exponentially with scale $s$ until it becomes full rank ($l_s = n$). The result directly follows from here also establishing the finiteness of the while loop. 
\end{proof}

\begin{algorithm}
\caption{$Predict\_mean (\epsilon_t,X_{t},C_{t},X_m)$}
\label{multiscale_algo2}
\begin{algorithmic}[1]

\STATE{\textbf{INPUT}:\\
\quad \textit{Length scale parameter}: $\epsilon_t \in \R$\\
\quad \textit{Sparse model}: $(X_t,C_t)$ \\
\quad \textit{Prediction points}: $X_m \subset \Omega_x$}
\STATE{\textbf{OUTPUT}:\\ 
\quad \textit{Prediction at $X_m$}: $P_m \in \R^m$}\\
\noindent\rule{11.9cm}{0.4pt}
\STATE{Compute prediction bases: $B^t_m$ for $X_{t}$ and $X_m$ (using $\epsilon_t$ (\ref{sqexp}))}
\STATE{Compute mean prediction: $P_m  = B^t_m C_{t}$ (\ref{C_c})}
\STATE{\textbf{return} $P_m$}
\end{algorithmic}
\end{algorithm}

\subsection{Inference at convergence scale ($t$)} Algorithm \ref{multiscale_algo} defined the steps for obtaining the convergence scale $t$, the sparse subset $X_{t}$ and corresponding coordinate of projection $C_{t}$ (within $While$ loop) for modeling the dataset $D$. However, given a proper approximation space ($\Gamma^{t}$ spanned by bases centered at the sparse set $X_{t}$), the second step in modeling is always to generalize this inference over the entire domain. Hence, we use the obtained sparse model ($X_t,C_t$) to make inference at new data points of interest (Algorithm \ref{multiscale_algo2} and \ref{multiscale_algo3}).

\begin{algorithm}
\caption{$Predict\_CI (\epsilon_t,X_{t},C_t,X_m,\Lambda_{t},Q_t,D)$}
\label{multiscale_algo3}
\begin{algorithmic}[1]

\STATE{\textbf{INPUT}:\\
\quad \textit{Length scale parameter}: $\epsilon_t \in \R$\\
\quad \textit{Sparse model}: $(X_t,C_t)$ \\
\quad \textit{Prediction points}: $X_m \subset \Omega_x$
\\
\quad \textit{Hyperparameters}: $\Lambda_t \in \R^d,Q_t \in \R^d$
\\
\quad \textit{Data}: $D$
}
\STATE{\textbf{OUTPUT}:\\ 
\quad \textit{Confidence Interval for prediction at $X_m$}: $std_m \in \R^m$}\\
\noindent\rule{11.9cm}{0.4pt}
\STATE{Compute data bases: $B^t$ for $X_t$ and $X$ (using $\epsilon_t$ (\ref{sqexp}))}
\STATE{Compute $U(\Lambda_t)$: from (\ref{ulam}) using $B^t$, $\Lambda_t$ and $Q_t$}
\STATE{Compute $\mathcal{T}^{t}({A_{t}f}) $: $B^tC_t$}
\STATE{Compute $\hat{\sigma}^2_{\epsilon}$: substitute $Y,\mathcal{T}^{t}({A_{t}f}),U(\Lambda_t)$ in  (\ref{sig})}
\STATE{Compute prediction bases: $B^t_m$ for $X_{t}$ and $X_m$ (using $\epsilon_t$)}
\STATE{Compute the interval ($std_m$): substitute computed quantities in (\ref{std})}
\STATE{\textbf{return} $std_m$}
\end{algorithmic}
\end{algorithm}

Starting with the procedure for getting predictions at data points $X_m$ defined in $Predict\_mean()$ - shown as Algorithm \ref{multiscale_algo2}, we formulate the set of bases centered at the sparse set $X_{t}$ with respect to the prediction location $X_m$ (represented as $B^t_m$), giving the following representation for approximation of the underlying process $f$ restricted to the set $X_m$

\begin{equation}
\label{C_c}
P_m = {A_{t}f}|_{X_m} = B^t_mC_{t}
\end{equation}

where $C_{t}$ (referred to as coordinate of projection) is obtained from Algorithm \ref{multiscale_algo}. It is   crucial to note here, that for producing these approximations, we just needed the sparse model - $(X_t,C_t)$. We don't need access to the full dataset D. This characteristic of the approach can lead to massive storage and computational savings.

Again, following the ideas of \cite{wahba1990spline}, we have presented the steps for getting the confidence intervals (CI) in Algorithm \ref{multiscale_algo3} ($Predict\_CI()$). Here we use an empirical unbiased estimate of $\sigma^2_{\varepsilon}$ for these confidence bounds.
\begin{equation}
\label{sig}
\hat{\sigma}^2_{\varepsilon} = \frac{{||Y - \mathcal{T}^{t}({A_{t}f})||}^2_2}{df_{res}}
\end{equation}

Here $df_{res}$ represents the degree of freedom for the residual for which we use the non-parametric estimate $df_{res}  = n - 2\cdot tr(U(\Lambda_t)) + tr(U(\Lambda_t) \cdot U(\Lambda_t)^T)$ (with $U(\Lambda_t)$ as defined in (\ref{ulam}) at $s = t$). Here $tr$ is the trace operator. Following the recommendation of \cite{ruppert2003semiparametric}, the standard deviation for the error term could be estimated as 
\begin{align}\label{std}
\widehat{std_m}((A_{t}f)(x^*) - f(x^*))
&= \hat{\sigma_{\varepsilon}}\sqrt{ B^t(x^*) \Big[{B^{t}}^T B^{t} + n \widehat{\mathcal{P}^Q_{t}}\Big]^{-1} {B^t(x^*)}^T}
\end{align}

Now, it is straightforward to state that $100(1-\alpha_c) \%$ confidence intervals will be written as 
\begin{equation}
\label{ci}
{A_{t}f}(x^*) \pm t \Big(1 - \frac{\alpha_c}{2}; df_{res} \Big) \widehat{std_m}(({A_{t}f})(x^*) - f(x^*))
\end{equation}

Unlike mean approximation $A_tf|_{X_m}$, unfortunately, if we want to augment our predictions at new data points $X_m$ with confidence bounds, then we need to go back to the full dataset D. This is because in (\ref{std}), we need to compute $B^{t} \in \mathbb{R}^{|X| \times |X_{t}|}$ which involves full data $X$.

\section{Approximation properties}
For developing the results in this section, we have taken ideas from \cite{iske2011scattered,hsing2015theoretical,ferraty2006nonparametric,de1978practical}.  Here many of the proofs developed consider $Y \in Dom(\mathcal{T}^{\dagger})$ with $Y \in Ker(\mathcal{T}^{\dagger})$ as a special case. Our first main result provides an inner product representation for the approximation $A_sf$ to $f$, produced at scale s. This alternate representation will help us with a more precise consistency and error analysis. Defining $\delta_x$ as the evaluational functional for $f$, i.e.  $\delta_x(f) = f(x)$ gives us the dual space
%
$\mathcal{H}^{*}_s = \Bigg\{ \sum_{x_j \in X} c_j \delta^s_{x_j} \Bigg\}$,
and by assuming the traditional definition of norm in this dual space, we have

\begin{equation}
K^s(x,y) = <K^s(x,\cdot),K^s(y,\cdot)>_ {\mathcal{H}_s} = <\delta^s_x,\delta^s_y>_ {\mathcal{H}_s^*}  \quad x,y \in \Omega
\end{equation}

\begin{definition}
Let the pointwise error functional at any data point $x \in \Omega_x$ has a representation
\begin{equation}\label{erroreq1}
E^x_{\Lambda_s} = \delta^s_x - M^T_{\Lambda_s}(x)  \delta^s_X
\end{equation}
Here $\Lambda_s = [\lambda_s^1,...,\lambda_s^d]$ is the optimal set of regularization parameters in d-dimensions. $\delta^s_x$ is identified as the Riesz representation of the evaluation functional at x in the dual space of $\mathcal{H}_s$ and $M_{\Lambda_s}$ represented as $M_{\Lambda_s}(x) = [M^1_{\Lambda_s}(x),M^2_{\Lambda_s}(x),...,M^n_{\Lambda_s}(x)] \in \R^n$, is a set of n appropriate functions ($M^j_{\Lambda_s}$ depends on $x_j \in X$) evaluated at $x \in X$.
Then, given such a representation, we denote the magnitude of expected pointwise approximation error as
\begin{equation}\label{pointwiseerror}
Error(x) = |f(x) - \mathbb{E}[A_sf](x)| = |\delta^s_x(f) - M^T_{\Lambda_s}(x)\delta^s_X (f)| = |E^x_{\Lambda_s}(f)|
\end{equation}
\label{def43}
\end{definition}

Hence from this definition, with some appropriate set of n functions $\{M^j_{\Lambda_s}\}$, evaluated at $x \in \Omega_x$, we have $\mathbb{E}[A_sf](x) = M^T_{\Lambda_s}(x)\delta^s_X (f)$.

Moving further, we again define a semi-norm which relates the penalty in multiple dimensions (denoted by $\mathcal{P}^Q_s$ (\ref{ppenalty})) to the behavior of the basis functions $B^s$ spanning the approximation space. In essence, this formalizes constraining of the approximation space to limit its capacity.

\begin{definition}
For bases $B^s$ at scale s, we define a semi-inner product and the corresponding semi-norm in n-dimensional Euclidean space as 

\begin{equation}\label{seminorm1}
\Big<a,b\Big>_{T_s} = a^TT_sb \quad ||a||_{T_s} = \Big<a,a\Big>^{1/2}_{T_s}\text{, } a,b \in \R^n
\end{equation}

where $T_s$ is a self adjoint operator satisfying the relation
\begin{equation}\label{relation}
P_{qr}|_{X_s}T_sB^s = n \mathcal{P}^Q_s
\end{equation}

Here $P_{qr}|_{X_s} =  \begin{bmatrix}
I_{|X_s|}\ |\  0
\end{bmatrix}P_{qr}$, with $P_{qr}$ being the permutation operator for column pivoted QR in Algorithm \ref{multiscale_algo}, and $I_{|X_s|}$ is a $|X_s|$-dimensional identity matrix. $\mathcal{P}^Q_s$ is the total penalty operator in multiple dimensions.

\end{definition}

Now, with the representation of error functional as in (\ref{erroreq1}), we state the following result.

\begin{theorem}
\label{thcost}
The solution $\hat{M}_{\Lambda_s}(x)$ to the penalized error minimization problem
\begin{equation}\label{errfunc}
 \hat{M}_{\Lambda_s}(x) = \arg \min_{M_{\Lambda_s}(x) \in \R^n} \Big[||E^x_{\Lambda_s}||^2_{\mathcal{H}^{*}_s} + || M_{\Lambda_s}(x)||^2_{T_s} \Big]
\end{equation}
satisfies the inner product representations $<Y,\hat{M}_{\Lambda_s}(x)> = (A_sf)(x)$ for $A_sf$ and $<\mathcal{T}^sf,\hat{M}_{\Lambda_s}(x)> = \mathbb{E}[A_sf](x)$ for mean approximation $\mathbb{E}[A_sf]$ at any $x \in \Omega_x$.
\end{theorem}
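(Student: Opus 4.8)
The plan is to reduce (\ref{errfunc}) to a finite--dimensional, strictly convex quadratic program in $M_{\Lambda_s}(x)\in\R^n$, read off its normal equations, identify the minimizer $\hat{M}_{\Lambda_s}(x)$ with the row of the influence matrix $U(\Lambda_s)$ of (\ref{ulam}) attached to the point $x$, and then obtain the two inner--product identities from (\ref{coorfinal}) and linearity of $\mathbb{E}$.

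First I would expand the dual--space term. Writing $E^x_{\Lambda_s}=\delta^s_x-M^T_{\Lambda_s}(x)\,\delta^s_X$ and using bilinearity of $<\cdot,\cdot>_{\mathcal{H}^{*}_s}$ together with the identity $<\delta^s_a,\delta^s_b>_{\mathcal{H}^{*}_s}=K^s(a,b)$ stated above gives
\[ ||E^x_{\Lambda_s}||^2_{\mathcal{H}^{*}_s}=K^s(x,x)-2\,M_{\Lambda_s}(x)^T\kappa_s(x)+M_{\Lambda_s}(x)^T G_s\,M_{\Lambda_s}(x), \]
where $\kappa_s(x)=(K^s(x,x_1),\dots,K^s(x,x_n))^T$ and $G_s$ is the kernel Gram matrix on $X$. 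Adding $||M_{\Lambda_s}(x)||^2_{T_s}=M_{\Lambda_s}(x)^T T_s\,M_{\Lambda_s}(x)$ from (\ref{seminorm1}), the objective of (\ref{errfunc}) equals, up to the additive constant $K^s(x,x)$, the quadratic $M\mapsto M^T(G_s+T_s)M-2M^T\kappa_s(x)$. It is strictly convex: $G_s\succeq 0$, and through (\ref{relation}) $T_s$ inherits positive semidefiniteness from $\mathcal{P}^Q_s=\sum_i\lambda^i_s\Psi_s^{q_i}\succeq 0$ with all $\lambda^i_s>0$. Hence the unique minimizer solves the normal equations $(G_s+T_s)\hat{M}_{\Lambda_s}(x)=\kappa_s(x)$.

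Next I would push $\hat{M}_{\Lambda_s}(x)=(G_s+T_s)^{-1}\kappa_s(x)$ onto the sparse basis. Treating the span identity (\ref{eq4}) as exact --- consistent with the numerical--rank truncation performed by the column--pivoted QR of Algorithm \ref{multiscale_algo} --- the vector $\kappa_s(x)$ and every column of $G_s$ lie in the range of the retained basis $B^s$; together with $P_{qr}|_{X_s}G_s=(B^s)^T$ this yields the factorizations $\kappa_s(x)=B^s(G^{ss}_s)^{-1}B^s(x)$ and $G_sB^s=B^s(G^{ss}_s)^{-1}(B^s)^TB^s$, where $G^{ss}_s:=P_{qr}|_{X_s}B^s$ is the (invertible) pivoted leading block and $B^s(x)$ collects the kernel evaluations between $x$ and $X_s$. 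Feeding these into $(G_s+T_s)\hat{M}_{\Lambda_s}(x)=\kappa_s(x)$, together with (\ref{relation}) in the sharper form $T_sB^s=B^s(G^{ss}_s)^{-1}n\mathcal{P}^Q_s$ delivered by the construction of $T_s$, and simplifying collapses the solution to
\[ \hat{M}_{\Lambda_s}(x)=B^s\big[(B^s)^TB^s+n\mathcal{P}^Q_s\big]^{-1}B^s(x), \]
i.e. the influence matrix $U(\Lambda_s)$ of (\ref{ulam}) extended to the point $x$.

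Finally I would read off both claims. By (\ref{coorfinal}), $(A_sf)(x)=B^s(x)^T\hat{\Theta}^s_{qr}=B^s(x)^T\big[(B^s)^TB^s+n\mathcal{P}^Q_s\big]^{-1}(B^s)^TY$; transposing and using symmetry of the bracket with the previous display gives $(A_sf)(x)=Y^T\hat{M}_{\Lambda_s}(x)=<Y,\hat{M}_{\Lambda_s}(x)>$, the first identity. Since $\hat{M}_{\Lambda_s}(x)$ depends only on $x$, the kernel and the penalty --- not on $Y$ --- the map $Y\mapsto(A_sf)(x)$ is linear, so taking expectations in model (\ref{model1}), where $\mathbb{E}[\varepsilon]=0$ and hence $\mathbb{E}[Y]=\mathcal{T}^sf$, yields $\mathbb{E}[A_sf](x)=<\mathbb{E}[Y],\hat{M}_{\Lambda_s}(x)>=<\mathcal{T}^sf,\hat{M}_{\Lambda_s}(x)>$, which also recovers the representation $\mathbb{E}[A_sf](x)=\hat{M}_{\Lambda_s}(x)^T\delta^s_X(f)$ of Definition \ref{def43}. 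I expect the real obstacle to be the middle step --- carrying the permutation/restriction operators ($P_{qr}$, $P_{qr}|_{X_s}$, the QR coefficient matrix) through the inverse $(G_s+T_s)^{-1}$ so that the full--data formula collapses cleanly to the $B^s$--form. That is exactly where the particular construction of $T_s$ via (\ref{relation}) is used, and one must take care that (\ref{eq4}) is being used as an exact equality and that the pivoted leading block $G^{ss}_s$ is genuinely invertible; everything else is routine convex--quadratic algebra.
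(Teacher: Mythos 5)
Your overall skeleton coincides with the paper's: expand $\|E^x_{\Lambda_s}\|^2_{\mathcal{H}^*_s}$ into a quadratic in $M_{\Lambda_s}(x)$, pass to the normal equations $R_s(x)=(G_s+T_s)M_{\Lambda_s}(x)$ (the paper's (\ref{syseq})), collapse to the $B^s$-form $\hat{M}_{\Lambda_s}(x)=B^s\big[{B^s}^TB^s+n\mathcal{P}^Q_s\big]^{-1}R_s(x)|_{X_s}$, and conclude via (\ref{coorfinal})/(\ref{ulam}) together with $\mathbb{E}[Y]=\mathcal{T}^sf$. Your endgame (both inner-product identities) is exactly the paper's and is fine. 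The problem is the middle step, and the gaps there are genuine. First, the strict-convexity/invertibility claim fails: $G_s$ has numerical rank $l_s<n$ at every scale before termination (this rank deficiency is the whole point of the hierarchy, and the paper states it explicitly before sampling equations), and the relation (\ref{relation}) neither determines $T_s$ off the relevant rows nor forces it to be positive definite on the null space of $G_s$; so "the unique minimizer solves $(G_s+T_s)\hat{M}=\kappa_s(x)$" and the use of $(G_s+T_s)^{-1}$ are unjustified. The paper avoids this entirely by leaving the singular system alone and extracting $l_s$ independent equations: it applies the same randomized pivoted-QR permutation $P_{qr}$ used in Algorithm \ref{multiscale_algo} and truncates with $\begin{bmatrix}I_{|X_s|}\ |\ 0\end{bmatrix}$, so that only $P_{qr}|_{X_s}T_s$ — precisely the piece of $T_s$ pinned down by (\ref{relation}) — ever enters the computation.

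Second, your "sharper form" $T_sB^s=B^s(G^{ss}_s)^{-1}n\mathcal{P}^Q_s$ is not available: the paper's $T_s$ is only required to be self-adjoint and satisfy $P_{qr}|_{X_s}T_sB^s=n\mathcal{P}^Q_s$; no construction supplying the full product $T_sB^s$ is given, so your Nyström-style verification (which does correctly show that the $B^s$-form candidate satisfies the full normal equations under that extra identity plus exact low rank of $G_s$) proves the claim only for one particular extension of $T_s$, and — absent the uniqueness you assumed — it only exhibits a stationary point rather than identifying "the" solution of (\ref{errfunc}). To close the argument along the paper's lines, drop the inversion of $G_s+T_s$ and the strengthened identity, premultiply the normal equations by $\begin{bmatrix}I_{|X_s|}\ |\ 0\end{bmatrix}P_{qr}$, and use (\ref{relation}) in the form $\begin{bmatrix}I_{|X_s|}\ |\ 0\end{bmatrix}P_{qr}T_s=n\mathcal{P}^Q_s({B^s}^TB^s)^{-1}{B^s}^T$ on the range of $B^s$; the reduced $|X_s|\times|X_s|$ system is then nonsingular because ${B^s}^TB^s+n\mathcal{P}^Q_s$ is, and the $B^s$-form of $\hat{M}_{\Lambda_s}(x)$ follows without any hypothesis beyond the paper's definition of $T_s$.
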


\begin{proof}
Starting with the error functional norm
\begin{align*}
||E^x_{\Lambda_s}||^2_{\mathcal{H}^{*}_s}  &= < \delta^s_x - M^T_{\Lambda_s}(x)	 \delta^s_X,  \delta^s_x - M^T_{\Lambda_s}(x) \delta^s_X>_{\mathcal{H}^{*}_s}\\
&= ||\delta^s_x||^2_{\mathcal{H}^{*}_s} - 2M^T_{\Lambda_s}(x) \delta^s_X\delta^s_x + M^T_{\Lambda_s}(x) \delta^s_X{\delta^s_X}^T  M_{\Lambda_s}(x)
\end{align*}

Therefore the quantity to be minimized from (\ref{errfunc}) can be written as
\begin{equation}
 ||\delta^s_x||^2_{{\mathcal{H}}^{*}_s} - 2M^T_{\Lambda_s}(x) \delta^s_X\delta^s_x + M^T_{\Lambda_s}(x) \delta^s_X{\delta^s_X}^T  M_{\Lambda_s}(x)+
{M^T_{\Lambda_s}(x)}  T_s  M_{\Lambda_s}(x)
\label{costmod}
\end{equation}

Now, based on the property of dual space, we know at scale s, 

\[
<\delta^s_a,\delta^s_b>_{\mathcal{H}^{*}_s}  = K^s(a,b) 
\]

Also, let $R_s(x) = \delta^s_X \delta^s_x = (K^s(x,x_1), K^s(x,x_2),....,K^s(x,x_n)) \in \R^{n}$ and $G_s = \delta^s_X {\delta^s_X}^T$ . 
Now, differentiating (\ref{costmod}) with respect to $M_{\Lambda_s}(x)$ and setting it to 0 gives
\begin{equation}\label{syseq}
R_s(x) =G_s  M_{\Lambda_s} +T_s   M_{\Lambda_s}
\end{equation}

Now, since $G_s$ has a rank of $l_s$ at scale s which is also true for orthogonal projection operator for $B^s$ (given as $ B^s({B^s}^TB^s)^{-1}{B^s}^T$). Therefore in order to sample independent equations from the system (\ref{syseq}), we use the same method as in Algorithm \ref{multiscale_algo}. We again create the matrix $W( = AG_s)$ and carry out a column pivoted QR decomposition $WP_{qr} = QR $. Now applying the permutation operator $P_{qr}$ on system (\ref{syseq}) and sampling the first $l_s$ equation.
\[
P_{qr}R_s(x) = P_{qr} G_s  M_{\Lambda_s} +P_{qr}T_s  M_{\Lambda_s}(x)
\]

For sampling first $|X_s|$ (the cardinality of the sparse set $X_s$ is $l_s$) equations and to remove redundancy, pre-multiplying by $\begin{bmatrix}
I_{|X_s|}\ |\  0
\end{bmatrix}$
\[
R_s(x)|_{X_s} = {B^s}^T   M_{\Lambda_s} + 
\begin{bmatrix}
I_{|X_s|}\ |\  0
\end{bmatrix}
P_{qr}T_s    M_{\Lambda_s}(x)
\]
Using the relation from (\ref{relation})

\[
\begin{bmatrix}
I_{|X_s|}\ |\  0\end{bmatrix} P_{qr}T_sB^s = n\mathcal{P}^Q_s({B^s}^TB^s)^{-1}{B^s}^TB^s\]
\[
 \implies \begin{bmatrix}I_{|X_s|}\ |\  0\end{bmatrix} P_{qr}T_s = n\mathcal{P}^Q_s({B^s}^TB^s)^{-1}{B^s}^T
\]

Putting it back, we get
\begin{align*}
R_s(x)|_{X_s} &= {B^s}^T   M_{\Lambda_s} + 
n\mathcal{P}^Q_s({B^s}^TB^s)^{-1}{B^s}^T  M_{\Lambda_s}(x)
\\
&=
( {B^s}^TB^s + n \mathcal{P}^Q_s)({B^s}^TB^s)^{-1}{B^s}^{T}  M_{\Lambda_s}(x)
\end{align*}
Therefore, $
{B^s}^{T}   M_{\Lambda_s}(x) = ({B^s}^TB^s)( {B^s}^TB^s + n \mathcal{P}^Q_s)^{-1}R_s(x)|_{X_s}
$
\[ \implies
 \hat{M}_{\Lambda_s}(x) =  B^s( {B^s}^TB^s + n \mathcal{P}^Q_s)^{-1}R_s(x)|_{X_s}
\]
Hence,
\begin{align*}
<\mathcal{T}^sf, \hat{M}_{\Lambda_s}(x)> &= <\mathcal{T}^sf,B^s( {B^s}^TB^s + n \mathcal{P}^Q_s)^{-1}R_s(x)|_{X_s}> \\
& = B^s(x)( {B^s}^TB^s + n \mathcal{P}^Q_s)^{-1}{B^s}^T (\mathcal{T}^sf)\\
& = B^s(x)( {B^s}^TB^s + n \mathcal{P}^Q_s)^{-1}{B^s}^T \mathbb{E}[Y] = \mathbb{E}[(A_sf)(x)]
\end{align*}

With $<Y, \hat{M}_{\Lambda_s}(x)> = B^s(x)( {B^s}^TB^s + n \mathcal{P}^Q_s)^{-1}{B^s}^T Y$
the proof is concluded
\end{proof}

\subsection{Consistency analysis}
In this section, we study the behavior of the of the problem formulation \ref{l2high}, when we relax the smoothness constraining enforced by the difference based penalty. The results in this section show that as we make the constraints inactive in our penalized formulation, the produced approximation tends to the unconstrained solution in the same RKHS, establishing the consistency of our constraining procedure.

\begin{definition}
Defining $\lambda^{\infty}_s \in \R$ as an upper bound to the set $\Lambda_s$ (other than the least upper bound) such that
\begin{equation}\label{lambainfty}
 \lim_{\lambda^{\infty}_s \to 0} (\lambda^i_s/\lambda^{\infty}_s) \to 0 \quad \forall i \in [1,d] \cap \mathbb{N}
\end{equation}
\end{definition}

Now, we will provide a corollary (to Theorem \ref{thcost}) explaining the behavior of $\hat{M}_{\Lambda_s}(x)$ as $\lambda^{\infty}_s$ tends to 0

\begin{corollary}
\label{corr}
The solution to the penalized objective (\ref{errfunc}) in the limit $\lambda^{\infty}_s \to 0$ is the orthogonal projection on the approximation space defined by $B^s$. Thus on solving

\begin{equation}\label{errfunc22}
\hat{M}_0(x) = 
\lim_{\lambda^{\infty}_s \to 0} \hat{M}_{\Lambda_s}(x) = \lim_{\lambda^{\infty}_s \to 0} \Big(\arg \min_{M_{\Lambda_s}(x) \in \R^n} \Big[||E^x_{\Lambda_s}||^2_{\mathcal{H}^{*}_s} + ||M_{\Lambda_s}(x)||^2_{T_s} \Big]\Big)
\end{equation}

we get $\hat{M}_0(x)  = B^s( {B^s}^TB^s)^{-1}R_s(x)|_{X_s}$ satisfying $<Y,\hat{M}_0(x)> = (A_sf)_0(x)$ .
\end{corollary}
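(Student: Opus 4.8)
The plan is to lean on the closed form for $\hat{M}_{\Lambda_s}(x)$ already obtained inside the proof of Theorem \ref{thcost}, namely $\hat{M}_{\Lambda_s}(x) = B^s({B^s}^TB^s + n\mathcal{P}^Q_s)^{-1}R_s(x)|_{X_s}$, and to simply pass to the limit $\lambda^{\infty}_s \to 0$ in this expression. First I would record that, by the defining property of $\lambda^{\infty}_s$ as an upper bound of the set $\Lambda_s = [\lambda^1_s,\dots,\lambda^d_s]$, one has $0 < \lambda^i_s \le \lambda^{\infty}_s$ for every $i$, so $\lambda^{\infty}_s \to 0$ squeezes each $\lambda^i_s \to 0$. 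Since $\mathcal{P}^Q_s = \sum_{i=1}^d \lambda^i_s \Psi^{q_i}_s$ with each $\Psi^{q_i}_s = {Pe^s_i}^T{D^{q_i}}^T D^{q_i}Pe^s_i$ a fixed matrix independent of the regularization parameters, it follows that $\mathcal{P}^Q_s \to 0$ (in any matrix norm).

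Next I would note that the columns of $B^s$ are linearly independent --- they are extracted by the column-pivoted QR step of Algorithm \ref{multiscale_algo} precisely so that $B^s$ spans $\Gamma^s$ with $|X_s| = l_s$ independent basis functions --- hence ${B^s}^TB^s$ is symmetric positive definite, in particular invertible. Because matrix inversion is continuous on the open set of invertible matrices and ${B^s}^TB^s + n\mathcal{P}^Q_s \to {B^s}^TB^s$, we get $({B^s}^TB^s + n\mathcal{P}^Q_s)^{-1} \to ({B^s}^TB^s)^{-1}$. Left-multiplying by $B^s$ and right-multiplying by the fixed vector $R_s(x)|_{X_s}$ are continuous operations, so $\hat{M}_{\Lambda_s}(x) \to B^s({B^s}^TB^s)^{-1}R_s(x)|_{X_s} =: \hat{M}_0(x)$, which is the asserted limit.

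It then remains to verify the inner-product identity. With $B^s(x)$ denoting the row of basis evaluations at $x$ (as in the proof of Theorem \ref{thcost}), $\langle Y, \hat{M}_0(x)\rangle = B^s(x)({B^s}^TB^s)^{-1}{B^s}^T Y$. One recognizes $B^s({B^s}^TB^s)^{-1}{B^s}^T$ as the orthogonal projector onto the column space of $B^s$, i.e.\ onto $\Gamma^s$; equivalently, $({B^s}^TB^s)^{-1}{B^s}^TY$ is the coefficient vector solving the unpenalized least-squares problem $\min_{\tilde f \in \Gamma^s}\frac1n\|Y - \mathcal{T}^s\tilde f\|_2^2$, that is, formulation (\ref{l2high}) with all $\lambda^i_s = 0$. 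Evaluating this solution at $x$ gives exactly $(A_sf)_0(x)$, so $\langle Y, \hat{M}_0(x)\rangle = (A_sf)_0(x)$.

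I do not expect a genuine obstacle: the whole argument reduces to continuity of matrix inversion together with the invertibility of ${B^s}^TB^s$. The one point deserving a sentence of care is the legitimacy of pushing the limit inside the $\arg\min$, but this is immediate since Theorem \ref{thcost} already exhibits the minimizer in closed form as a (rational, hence continuous) function of $\Lambda_s$ on the region where ${B^s}^TB^s + n\mathcal{P}^Q_s$ is invertible, a region which contains a neighborhood of $\Lambda_s = 0$.
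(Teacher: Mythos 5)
Your proposal is correct and follows essentially the same route as the paper: the paper's own proof is a one-liner stating that the result "directly follows from Theorem \ref{thcost} and (\ref{lambainfty}) as $\lambda^{\infty}_s \to 0$," i.e.\ exactly your strategy of taking the closed form $\hat{M}_{\Lambda_s}(x) = B^s({B^s}^TB^s + n\mathcal{P}^Q_s)^{-1}R_s(x)|_{X_s}$ and passing to the limit. Your added details (squeezing each $\lambda^i_s$ by $\lambda^{\infty}_s$, continuity of inversion at the invertible matrix ${B^s}^TB^s$, and identifying the resulting projector with $(A_sf)_0$) simply make explicit what the paper leaves implicit.
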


\begin{proof}
The proof directly follows from Theorem \ref{thcost} and using (\ref{lambainfty}) as $\lambda^{\infty}_s \to 0$.
\end{proof}

In Corollary \ref{corr} we have mentioned the approximation $(A_sf)_0$, that is obtained by orthogonally projecting on $B^s$. Hence $(A_sf)_0(x) = B^s(x)({B^s}^TB^s)^{-1}{B^s}^TY$. Next, we provide a theorem relating $(A_sf)_0$ to $A_sf$. This result provides an understanding of the behavior of the produced approximation as constraints become active. However, before getting to the main results we start with a lemma. This lemma provides a tractable representation of inner product of the optimal approximation ($A_sf$) at scale s with any other function $\tilde{f}$ in the same space (note that $A_sf$,  $\tilde{f} \in \Gamma^s$).

\begin{lemma}\label{theorem2}
The weighted sum of inner products of projection components for $A_sf,\tilde{f} \in \Gamma^s$ along each penalized dimension, admits the Euclidean inner product representation

\begin{equation}\label{rep}
\sum_{i = 1}^d \lambda_s^i \Big<J^i_s(A_sf),J^i_s(\tilde{f}) \Big>_{\mathcal{H}_s}  =(1/n) \Big<Y -\mathcal{T}^s(A_sf), \mathcal{T}^s\tilde{f} \Big>
\end{equation}
\end{lemma}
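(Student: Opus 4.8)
The plan is to start from the first-order optimality condition (normal equations) for the penalized problem \eqref{highd} and massage it into the claimed bilinear identity. Recall that $A_sf = B^s\hat\Theta^s_{qr}$ where $\hat\Theta^s_{qr}$ solves \eqref{normmm}, i.e. $\bigl[\tfrac1n {B^s}^TB^s + \mathcal{P}^Q_s\bigr]\hat\Theta^s_{qr} = \tfrac1n {B^s}^TY$. Rearranging gives $\mathcal{P}^Q_s\hat\Theta^s_{qr} = \tfrac1n{B^s}^T\bigl(Y - B^s\hat\Theta^s_{qr}\bigr) = \tfrac1n {B^s}^T\bigl(Y - \mathcal{T}^s(A_sf)\bigr)$. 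This is the crux: the penalty operator applied to the optimal coefficients equals (up to $1/n$) the projection of the residual onto the basis.

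Next I would unpack the left-hand side of \eqref{rep} in coordinates. Writing $\tilde f = B^s\tilde\Theta$ and $A_sf = B^s\hat\Theta^s_{qr}$, and using that $J^i_s$ is the projection whose induced seminorm is represented by $\Psi_s^{q_i} = {Pe^s_i}^T{D^{q_i}}^T D^{q_i}Pe^s_i$ (from \eqref{ppenalty} and the discussion around \eqref{norm1}–\eqref{norm2}), the polarization identity gives $\langle J^i_s(A_sf), J^i_s(\tilde f)\rangle_{\mathcal{H}_s} = {\hat\Theta^s_{qr}}^T \Psi_s^{q_i}\tilde\Theta$. Hence $\sum_{i=1}^d \lambda_s^i \langle J^i_s(A_sf), J^i_s(\tilde f)\rangle_{\mathcal{H}_s} = {\hat\Theta^s_{qr}}^T\bigl(\sum_i \lambda_s^i\Psi_s^{q_i}\bigr)\tilde\Theta = {\hat\Theta^s_{qr}}^T\mathcal{P}^Q_s\tilde\Theta = (\mathcal{P}^Q_s\hat\Theta^s_{qr})^T\tilde\Theta$ by self-adjointness of $\mathcal{P}^Q_s$. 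Substituting the optimality relation from the previous paragraph yields $(\mathcal{P}^Q_s\hat\Theta^s_{qr})^T\tilde\Theta = \tfrac1n\bigl(Y-\mathcal{T}^s(A_sf)\bigr)^T B^s\tilde\Theta = \tfrac1n\langle Y - \mathcal{T}^s(A_sf),\, \mathcal{T}^s\tilde f\rangle$, which is exactly the right-hand side of \eqref{rep}.

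The main obstacle — and the step needing the most care — is justifying the identity $\langle J^i_s g, J^i_s h\rangle_{\mathcal{H}_s} = g_\Theta^T\Psi_s^{q_i} h_\Theta$ relating the RKHS inner product of the projected components to the Euclidean quadratic form in the coefficient vectors. The paper has only asserted that \eqref{norm1}–\eqref{norm2} define the seminorm $\zeta(\tilde f)_q = {\Theta^s_{qr}}^T\Psi_s^{q}\Theta^s_{qr}$, i.e. the \emph{diagonal} case $g = h = \tilde f$; extending this to a genuine bilinear form requires that $\|J^i_s\cdot\|_{\mathcal{H}_s}^2$ be a bona fide seminorm (which was remarked after \eqref{norm2}) so that polarization applies, and that $J^i_s$ restricted to $\Gamma^s$ acts on coefficient vectors precisely through $\Psi_s^{q_i}$ in the sense that $\langle J^i_s g, J^i_s h\rangle_{\mathcal{H}_s}$ depends only on $g_\Theta, h_\Theta$ via this matrix. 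I would invoke the decomposition $\mathcal{H}_s = \mathcal{H}_{s,0}\oplus\mathcal{H}_{s,1}$ and the fact that $J^i_s$ projects onto the penalized part, so that the RKHS inner product of the projections coincides with the bilinear form defined by $\Psi_s^{q_i}$ on coordinates. Once this bilinear consistency is in hand, the rest is the short algebraic chain above.
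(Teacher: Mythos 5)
Your proof is correct, but it takes a genuinely different route from the paper. You argue entirely in coordinates: you invoke the normal equations \eqref{normmm} for the optimal coefficient vector, rewrite the weighted sum of penalty inner products as the matrix bilinear form $\hat\Theta^T\mathcal{P}^Q_s\tilde\Theta$ (via polarization of the seminorms \eqref{norm1}--\eqref{norm2} and symmetry of $\Psi_s^{q_i}$), and then substitute the rearranged stationarity condition $\mathcal{P}^Q_s\hat\Theta^s_{qr}=\tfrac1n {B^s}^T(Y-\mathcal{T}^s(A_sf))$. The paper instead never touches the basis representation in this lemma: it equips $\R^n\times\R^n$ with the weighted semi-inner product \eqref{normlam}, recasts the network problem \eqref{l2high} as a best-approximation problem $\|(Y,0)-(\mathcal{T}^s\tilde f,\mathcal{T}^s\tilde f)\|_{\Lambda_s}$ over $\Gamma^s$, and reads the identity \eqref{rep} directly off the orthogonality of the residual to the approximating subspace. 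The trade-off: your argument is more elementary and makes explicit exactly the matrix facts ($B^s$, $\mathcal{P}^Q_s$, symmetry) that the algorithmic formulas \eqref{coorfinal} and \eqref{ulam} already rely on, but it hinges on the bilinear identification $\langle J^i_s g,J^i_s h\rangle_{\mathcal{H}_s}=g_\Theta^T\Psi_s^{q_i}h_\Theta$ holding dimension by dimension, which you rightly flag as the delicate step (the paper only asserts the diagonal case, and only through the equivalence of \eqref{l2high} and \eqref{highd}); the paper's product-space argument is basis-free and its orthogonality relation is what gets reused in the subsequent Pythagoras and best-approximation results, though it carries its own implicit assumption that $\mathcal{T}^{s\dagger}\mathcal{T}^s\tilde f=\tilde f$ on $\Gamma^s$. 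Both are legitimate; your polarization step should be stated as a lemma (symmetric bilinear forms on the finite-dimensional $\Gamma^s$ agreeing on the diagonal agree everywhere, using that the coefficient map $\tilde f\mapsto\tilde\Theta$ is well defined because the kernel columns indexed by $X_s$ are linearly independent) to make the argument fully self-contained.
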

\begin{proof}
We begin our proof by defining a semi-inner product $\Big<\cdot, \cdot \Big>_{\Lambda_s}$ on $\R^n \times \R^n$
\begin{equation}
    \Big<(U_1,U_2),(V_1,V_2) \Big>_{\Lambda_s}  = \frac{1}{n}\Big<U_1,V_1\Big> + \sum_{i = 1}^d\lambda^i_s \Big<J^i_s ({\mathcal{T}^s}^{\dagger}U_2),J^i_s ({\mathcal{T}^s}^{\dagger}V_2) \Big>_{\mathcal{H}_s}
    \label{normlam}
\end{equation}
Here $U_1,U_2,V_1,V_2 \in \R^n$. For it to be a valid norm we also assume $U_2,V_2 \in Dom({\mathcal{T}^s}^{\dagger})$ at scale s. 
Correspondingly we also obtain the semi-inner product induced semi-norm $||\cdot||_{\Lambda_s}$ on $\R^n \times \R^n$
\[
||(U,V)||^2_{\Lambda_s} = \frac{1}{n} ||U||^2_2  +\sum_{i = 1}^d\lambda^i_s||J^i_s({\mathcal{T}^s}^{\dagger}V)||^2_{\mathcal{H}_s}
\]
Now, it can be easily seen that the solution of the Regularized Network at scale s (problem (\ref{l2high})) can be expressed in $|| \cdot ||_{\Lambda_s}$ as follows.
\[
||(Y,0) - (\mathcal{T}^s(A_sf),\mathcal{T}^s(A_sf))||^2_{\Lambda_s} =\inf_{\tilde{f} \in \Gamma^s} ||(Y,0) - (\mathcal{T}^s\tilde{f},\mathcal{T}^s\tilde{f})||^2_{\Lambda_s}
\]
Therefore, since $(Y,0) - (\mathcal{T}^s(A_sf),\mathcal{T}^s(A_sf))$ would be orthogonal to all ($\mathcal{T}^s\tilde{f},\mathcal{T}^s\tilde{f})\in \R^n \times \R^n$ by the property of projections in finite dimensional spaces. Therefore,
\[
\Big<((Y,0) - (\mathcal{T}^s(A_sf),\mathcal{T}^s(A_sf)), (\mathcal{T}^s\tilde{f},\mathcal{T}^s\tilde{f})\Big>_{\Lambda_s} = 0\quad \forall \tilde{f} \in \Gamma^n
\]
\[
\implies \frac{1}{n} \Big<Y -\mathcal{T}^s(A_sf), \mathcal{T}^s\tilde{f} \Big> - \sum_{i=1}^d\lambda^i_s \Big<J^i_s(A_sf),J^i_s(\tilde{f}) \Big>_{\mathcal{H}_s} = 0
\quad \text{using } (\ref{normlam})\]
Thus, the result follows
\end{proof}

Coming back to the relation of $A_sf$ and $(A_sf)_0$, we now have the following first result

\begin{theorem}
For any $\Lambda_s = [\lambda_s^1, \lambda_s^2,....., \lambda_s^d] > 0 \in \R^d$, solution $A_sf$ to problem \ref{l2high} satisfies

\begin{itemize}
\item Pythagoras Theorem
\begin{equation}
\label{pyth}
||Y - \mathcal{T}^s(A_sf)||^2_2 = ||Y - \mathcal{T}^s(A_sf)_{0}||^2_2 + ||\mathcal{T}^s(A_sf)_{0} - \mathcal{T}^s(A_sf)|||^2_2
\end{equation}
\item Best approximation, if ${(A_sf)_0}|_X$ is observed instead of Y. Modifying (\ref{l2high})
\begin{equation}
A_sf =arg \min_{\tilde{f} \in \Gamma^s} \Bigg[ \frac{1}{n} ||\mathcal{T}^s(A_sf)_{0} - \mathcal{T}^s\tilde{f}||^2_2 + \sum_{i=1}^d\lambda^i_s ||J^i_s\tilde{f}||^2_{\mathcal{H}_s} \Bigg]
\end{equation}
\end{itemize}
\end{theorem}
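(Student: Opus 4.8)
The plan is to reduce both assertions to the elementary geometry of the orthogonal projector onto the column space of $B^s$, namely $P := B^s({B^s}^TB^s)^{-1}{B^s}^T$; this is well defined because the column-pivoted QR construction in Algorithm \ref{multiscale_algo} makes $B^s$ have full column rank, so ${B^s}^TB^s$ is invertible and $\mathrm{ran}(P)=\mathrm{ran}(B^s)=\{\mathcal{T}^s\tilde f : \tilde f\in\Gamma^s\}$. By Corollary \ref{corr} the unconstrained solution satisfies $\mathcal{T}^s(A_sf)_0 = B^s({B^s}^TB^s)^{-1}{B^s}^TY = PY$, while $A_sf\in\Gamma^s$ (the representer form recorded after (\ref{eq5}) and the normal equations (\ref{normmm})--(\ref{coorfinal})), so $\mathcal{T}^s(A_sf)=B^s\hat{\Theta}^s_{qr}\in\mathrm{ran}(B^s)$. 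Since $PY$ also lies in $\mathrm{ran}(B^s)$, the difference $\mathcal{T}^s(A_sf)_0 - \mathcal{T}^s(A_sf)$ lies in $\mathrm{ran}(B^s)$, whereas $Y-\mathcal{T}^s(A_sf)_0=(I-P)Y\perp\mathrm{ran}(B^s)$. Expanding
\[
\|Y - \mathcal{T}^s(A_sf)\|^2_2 = \big\|\,(Y - \mathcal{T}^s(A_sf)_0) + (\mathcal{T}^s(A_sf)_0 - \mathcal{T}^s(A_sf))\,\big\|^2_2 ,
\]
the cross term vanishes by this orthogonality, which is exactly the Pythagoras identity (\ref{pyth}).

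For the second bullet I would note that every competitor $\tilde f\in\Gamma^s$ has $\mathcal{T}^s\tilde f\in\mathrm{ran}(B^s)$, hence $(I-P)\mathcal{T}^s\tilde f = 0$ and $\mathcal{T}^s(A_sf)_0 - \mathcal{T}^s\tilde f = P(Y-\mathcal{T}^s\tilde f)$. Using the orthogonal decomposition $\|v\|_2^2 = \|Pv\|_2^2+\|(I-P)v\|_2^2$ with $v=Y-\mathcal{T}^s\tilde f$ gives $\|Y-\mathcal{T}^s\tilde f\|^2_2 = \|P(Y-\mathcal{T}^s\tilde f)\|^2_2 + \|(I-P)Y\|^2_2$, so that
\[
\frac1n\|\mathcal{T}^s(A_sf)_0 - \mathcal{T}^s\tilde f\|^2_2 + \sum_{i=1}^d\lambda^i_s\|J^i_s\tilde f\|^2_{\mathcal{H}_s} = \left[\frac1n\|Y-\mathcal{T}^s\tilde f\|^2_2 + \sum_{i=1}^d\lambda^i_s\|J^i_s\tilde f\|^2_{\mathcal{H}_s}\right] - \frac1n\|(I-P)Y\|^2_2 .
\]
The bracketed term is precisely the objective of (\ref{l2high}) and the subtracted quantity does not depend on $\tilde f$; therefore the modified problem and (\ref{l2high}) are minimized over $\Gamma^s$ by the same element, namely $A_sf$.

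I do not anticipate a genuine obstacle here: everything rests on $P$ being idempotent and self-adjoint. The single point deserving a line of care is the identification $\mathcal{T}^s(A_sf)_0 = PY$ together with $\mathrm{ran}(B^s)=\{\mathcal{T}^s\tilde f:\tilde f\in\Gamma^s\}$, which is where the full-column-rank property of $B^s$ produced by Algorithm \ref{multiscale_algo} enters. As an alternative route to the vanishing of the cross term, one could invoke Lemma \ref{theorem2} with the vanishing penalty ($\lambda^i_s=0$) to get $\langle Y-\mathcal{T}^s(A_sf)_0,\mathcal{T}^s\tilde f\rangle=0$ for all $\tilde f\in\Gamma^s$ and then specialize to $\tilde f=(A_sf)_0-A_sf\in\Gamma^s$, but the direct projection computation above is the shortest.
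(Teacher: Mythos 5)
Your proposal is correct, and while your first bullet matches the paper's reasoning in substance, your treatment of the second bullet takes a genuinely different route. The paper runs everything through Lemma \ref{theorem2}: setting $\Lambda_s=0$ there gives the orthogonality $\langle Y-\mathcal{T}^s(A_sf)_0,\mathcal{T}^s\tilde f\rangle=0$ for all $\tilde f\in\Gamma^s$ (which you obtain directly from idempotence and self-adjointness of $P=B^s({B^s}^TB^s)^{-1}{B^s}^T$, and which you also note as the alternative route), expands the square to get (\ref{pyth}), and then for the second bullet subtracts this orthogonality from the identity (\ref{rep}) to show that $A_sf$ satisfies
\begin{equation*}
\tfrac1n\big\langle \mathcal{T}^s(A_sf)_0-\mathcal{T}^s(A_sf),\,\mathcal{T}^s\tilde f\big\rangle=\sum_{i=1}^d\lambda_s^i\big\langle J_s^i(A_sf),J_s^i\tilde f\big\rangle_{\mathcal{H}_s},
\end{equation*}
i.e.\ the characterizing relation of the problem with data $\mathcal{T}^s(A_sf)_0$, and concludes optimality from that characterization (implicitly using that this variational relation is sufficient for optimality, by convexity). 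You instead observe that the original and modified objectives differ by the $\tilde f$-independent constant $\tfrac1n\|(I-P)Y\|_2^2$, so their minimizers over $\Gamma^s$ coincide; this is more elementary, sidesteps both the sufficiency point and the formally out-of-range substitution $\Lambda_s=0$ in a lemma stated for $\Lambda_s>0$, and delivers the second bullet without invoking Lemma \ref{theorem2} at all. What the paper's route buys is uniformity: the same identity (\ref{rep}) drives both bullets and is reused in the subsequent convergence theorem, whereas your argument leans on the explicit projector and the full-column-rank property of $B^s$, which you correctly flag as the one point requiring care.
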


\begin{proof}
(a): substituting $\Lambda_s = 0$ in Lemma \ref{theorem2}, we get $\Big<Y - \mathcal{T}^s(A_sf)_0, \mathcal{T}^s\tilde{f} \Big>  = 0$
Using this,
\begin{align*}
||Y -  \mathcal{T}^s\tilde{f}||^2_2 &= ||Y - \mathcal{T}^s(A_sf)_{0} +  \mathcal{T}^s(A_sf)_{0}    -  \mathcal{T}^s\tilde{f}||^2_2\\
&= ||Y   - \mathcal{T}^s(A_sf)_{0}||^2_2 + 2\Big<Y - \mathcal{T}^s(A_sf)_{0}, \mathcal{T}^s(A_sf)_{0}    -  \mathcal{T}^s\tilde{f} \Big> \\
&\ \ \ +||\mathcal{T}^s(A_sf)_{0}    - \mathcal{T}^s \tilde{f}||^2_2\\
&= ||Y - \mathcal{T}^s(A_sf)_{0}||^2_2 + ||\mathcal{T}^s(A_sf)_{0}    -  \mathcal{T}^s\tilde{f} ||^2_2
\end{align*}

Replacing $\mathcal{T}^s\tilde{f}$ by $\mathcal{T}^s(A_sf)$ completes the proof

(b): For proving the approximation property, we subtract $\Big<Y - \mathcal{T}^s(A_sf)_0, \mathcal{T}^s\tilde{f} \Big> =0$ from (\ref{rep}), we get 
\[
\frac{1}{n} \Big<Y - \mathcal{T}^s(A_sf), \mathcal{T}^s\tilde{f} \Big> - \frac{1}{n} \Big<Y - \mathcal{T}^s(A_sf)_{0} , \mathcal{T}^s\tilde{f}  \Big> =\sum_{i=1}^d \lambda^i_s \Big<J^i_s(A_sf),J^i_s\tilde{f} \Big>_{\mathcal{H}_s}
\]
Therefore, following Lemma \ref{theorem2}, $A_sf$ is again an optimal solution for the case when $ \mathcal{T}^s(A_sf)_{0}$ was observed instead of $Y$
\end{proof}

Again using the following result from Lemma \ref{theorem2},
\begin{equation}
\sum_{i = 1}^d \lambda_s^i \Big<J^i_s(A_sf),J^i_s(\tilde{f}) \Big>_{\mathcal{H}_s}  =(1/n) \Big<Y -\mathcal{T}^s(A_sf), \mathcal{T}^s\tilde{f} \Big>
\end{equation}

we now state our second main result that quantifies the rate of convergence of approximation $A_sf$ to $(A_sf)_0$ and $\mathcal{T}^s(A_sf)$ to $\mathcal{T}^s(A_sf)_0$, in $\mathcal{H}_s$ and n-dimensional Euclidean space respectively, as constraints are being rendered inactive.

\begin{theorem}
Approximations $A_sf$ and $\mathcal{T}^s(A_sf)$ converge to the unconstrained solutions $(A_sf)_0$ and $\mathcal{T}^s(A_sf)_0$ in $\mathcal{H}_s$ and $\R^n$ respectively as $\lambda^{\infty}_s \to 0$, according to the following convergence order (g is some finite positive constant).
\[
\lim_{\lambda^{\infty}_s \to 0} ||A_sf - (A_sf)_0||^2_{\mathcal{H}_s}  \leq \lim_{\lambda^{\infty}_s \to 0} ng^2\lambda^{\infty}_s \sum_{i=1}^d||J_s({\mathcal{T}^s}^{\dagger}Y) ||^2_{\mathcal{H}_s} \to 0 \quad :\mathcal{O}(\lambda^{\infty}_s) \text{ in }\mathcal{H}_s
\]
\[
\lim_{\lambda^{\infty}_s \to 0}\frac{2}{\lambda^{\infty}_s}||\mathcal{T}^s(A_sf) -\mathcal{T}^s(A_sf)_{0} ||_2^2 \to\ 0 \quad :{o}(\lambda^{\infty}_s)\text{ in }\mathbb{R}^n
\]
\end{theorem}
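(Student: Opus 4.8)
The plan is to combine the variational optimality of $A_sf$ for problem (\ref{l2high}) with the Pythagoras identity (\ref{pyth}) to control $\|\mathcal{T}^s(A_sf)-\mathcal{T}^s(A_sf)_0\|_2^2$ by the (fixed) penalty of the reference approximation $(A_sf)_0$, and then to transfer this $\R^n$ bound to $\mathcal{H}_s$ using the finite‑dimensionality of $\Gamma^s$. First I would write the optimality inequality for $A_sf$: since $A_sf$ minimizes the functional in (\ref{l2high}) and $(A_sf)_0\in\Gamma^s$ is admissible,
\[
\frac{1}{n}\|Y-\mathcal{T}^s(A_sf)\|_2^2+\sum_{i=1}^d\lambda_s^i\|J_s^i(A_sf)\|^2_{\mathcal{H}_s}\le\frac{1}{n}\|Y-\mathcal{T}^s(A_sf)_0\|_2^2+\sum_{i=1}^d\lambda_s^i\|J_s^i((A_sf)_0)\|^2_{\mathcal{H}_s}.
\]
Substituting (\ref{pyth}), which rewrites $\|Y-\mathcal{T}^s(A_sf)\|_2^2-\|Y-\mathcal{T}^s(A_sf)_0\|_2^2$ as $\|\mathcal{T}^s(A_sf)_0-\mathcal{T}^s(A_sf)\|_2^2$, and discarding the nonnegative term $\sum_i\lambda_s^i\|J_s^i(A_sf)\|^2_{\mathcal{H}_s}$, yields the master estimate
\[
\frac{1}{n}\|\mathcal{T}^s(A_sf)-\mathcal{T}^s(A_sf)_0\|_2^2\le\sum_{i=1}^d\lambda_s^i\|J_s^i((A_sf)_0)\|^2_{\mathcal{H}_s}.
\]

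For the $\R^n$ rate I would factor $\lambda_s^i=\lambda_s^\infty(\lambda_s^i/\lambda_s^\infty)$ and divide by $\lambda_s^\infty$, obtaining $\tfrac{2}{\lambda_s^\infty}\|\mathcal{T}^s(A_sf)-\mathcal{T}^s(A_sf)_0\|_2^2\le 2n\sum_i(\lambda_s^i/\lambda_s^\infty)\,\|J_s^i((A_sf)_0)\|^2_{\mathcal{H}_s}$. Since $(A_sf)_0$ — which by Corollary \ref{corr} equals $B^s(\cdot)({B^s}^TB^s)^{-1}{B^s}^TY={\mathcal{T}^s}^\dagger Y$ — does not depend on $\Lambda_s$, each $\|J_s^i((A_sf)_0)\|^2_{\mathcal{H}_s}$ is a fixed finite number, whereas $\lambda_s^i/\lambda_s^\infty\to0$ by (\ref{lambainfty}); hence the right‑hand side tends to $0$, which is exactly the $o(\lambda_s^\infty)$ assertion.

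For the $\mathcal{H}_s$ rate I would use that $\Gamma^s$ is finite dimensional and $\mathcal{T}^s|_{\Gamma^s}$ is injective (because $B^s$ has full column rank, ${B^s}^TB^s$ being invertible throughout the paper), so $\tilde f\mapsto\|\tilde f\|_{\mathcal{H}_s}$ and $\tilde f\mapsto\|\mathcal{T}^s\tilde f\|_2$ are equivalent norms on $\Gamma^s$: there is a finite constant $g$, depending only on $K^s$ and $B^s$ (not on $\Lambda_s$), with $\|\tilde f\|_{\mathcal{H}_s}\le g\|\mathcal{T}^s\tilde f\|_2$ for all $\tilde f\in\Gamma^s$. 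Applying this to $A_sf-(A_sf)_0\in\Gamma^s$, then the master estimate, then the crude bound $\lambda_s^i\le\lambda_s^\infty$, and finally $(A_sf)_0={\mathcal{T}^s}^\dagger Y$, gives
\[
\|A_sf-(A_sf)_0\|^2_{\mathcal{H}_s}\le g^2\|\mathcal{T}^s(A_sf)-\mathcal{T}^s(A_sf)_0\|_2^2\le ng^2\lambda_s^\infty\sum_{i=1}^d\|J_s^i({\mathcal{T}^s}^\dagger Y)\|^2_{\mathcal{H}_s},
\]
and letting $\lambda_s^\infty\to0$ produces the stated $\mathcal{O}(\lambda_s^\infty)$ decay. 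I expect the main obstacle to be the clean justification that the norm‑equivalence constant $g$ on $\Gamma^s$ can be taken uniformly in $\Lambda_s$ (it is governed by the smallest singular value of $\mathcal{T}^s|_{\Gamma^s}$, hence by $B^s$ and $K^s$ alone), together with the small bookkeeping needed to identify $(A_sf)_0$ with ${\mathcal{T}^s}^\dagger Y$ so that the constants match the statement; dropping the penalty term of $A_sf$ is immediate since it is nonnegative. Once these are in place, the combination of the optimality of $A_sf$ with the orthogonality relation (\ref{pyth}) — the engine of the argument — makes the rest routine.
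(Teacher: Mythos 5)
Your proposal is correct, but it reaches the two estimates by a genuinely different route than the paper. The paper never writes the primal optimality comparison: instead it takes the orthogonality identity of Lemma \ref{theorem2} with the substitution $\tilde f = {\mathcal{T}^s}^{\dagger}Y - A_sf$, applies Cauchy--Schwarz to obtain $\frac{1}{n}\|Y-\mathcal{T}^s(A_sf)\|_2^2 \leq \lambda^{\infty}_s\sum_i\|J^i_s({\mathcal{T}^s}^{\dagger}Y)\|^2_{\mathcal{H}_s}$ (its (\ref{ineq_lam1})), and then chains this with (\ref{pyth}) and the norm equivalence (\ref{norm_relation}) to get the $\mathcal{O}(\lambda^{\infty}_s)$ bound; for the $o(\lambda^{\infty}_s)$ claim it expands $\sum_i\lambda^i_s\|J^i_s(A_sf-{\mathcal{T}^s}^{\dagger}Y)\|^2_{\mathcal{H}_s}$, rearranges with (\ref{rearrange}), divides by $\lambda^{\infty}_s$, and argues term-by-term that the $\Lambda_s$-dependent quantities stay bounded while $\lambda^i_s/\lambda^{\infty}_s\to 0$. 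You instead compare the objective value of $A_sf$ with the admissible competitor $(A_sf)_0$, cancel via (\ref{pyth}), and drop the nonnegative penalty of $A_sf$, which yields the single master estimate $\frac{1}{n}\|\mathcal{T}^s(A_sf)-\mathcal{T}^s(A_sf)_0\|_2^2 \leq \sum_i\lambda^i_s\|J^i_s((A_sf)_0)\|^2_{\mathcal{H}_s}$; both conclusions then fall out at once, and your argument for the $o(\lambda^{\infty}_s)$ rate is noticeably cleaner because the right-hand side involves only the fixed, $\Lambda_s$-independent quantity $\|J^i_s((A_sf)_0)\|^2_{\mathcal{H}_s}$, so no separate boundedness argument is needed. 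The two issues you flag are real but benign and shared with the paper: the constant $g$ comes from norm equivalence on the finite-dimensional $\Gamma^s$ (injectivity of $\mathcal{T}^s|_{\Gamma^s}$ via full column rank of $B^s$) and does not depend on $\Lambda_s$; and the identification of $(A_sf)_0$ with ${\mathcal{T}^s}^{\dagger}Y$, needed to match the constants in the statement, is exactly the implicit assumption the paper itself makes when it inserts $\tilde f={\mathcal{T}^s}^{\dagger}Y-A_sf$ into Lemma \ref{theorem2}, which requires $\tilde f\in\Gamma^s$. So your proof is valid at the same level of rigor as the paper's, with a more elementary engine (optimality plus Pythagoras rather than the representer-type lemma plus Cauchy--Schwarz and an expansion of squares).
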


\begin{proof}
For any function $\tilde{f} \in \Gamma^s$, we define a norm as $
||\tilde{f}||_{\Gamma^s} = ||\mathcal{T}^s\tilde{f}||_2
$. Since $ \Gamma^s$ is finite dimensional, therefore norm $||\cdot||_{\Gamma^s}$ and $||\cdot||_{\mathcal{H}_s}$ would be equivalent on $\Gamma^s$ . Thus there would be a constant g ($>0$) such that
\begin{equation}\label{norm_relation}
||\tilde{f}||_{\mathcal{H}_s} \leq g||\tilde{f}||_{\Gamma^s}
\end{equation}
Using the result from Lemma \ref{theorem2} and substituting $\tilde{f} ={\mathcal{T}^s}^{\dagger}Y - A_sf$
\begin{multline}
\sum_{i=1}^d \lambda_s^i \Big<J^i_s(A_sf),J^i_s({\mathcal{T}^s}^{\dagger}Y) \Big>_{\mathcal{H}_s} - \sum_{i=1}^d\lambda_s^i\Big<J^i_s(A_sf),J^i_s(A_sf) \Big>_{\mathcal{H}_s} = \\
(1/n) \Big<Y -\mathcal{T}^s(A_sf),Y -\mathcal{T}^s(A_sf)\Big>
\end{multline}
On rearranging, we get
\begin{multline}\label{rearrange}
\frac{1}{n}||Y -\mathcal{T}^s(A_sf) ||_2^2 + 
\sum_{i = 1}^d\lambda^i_s||J^i_s(A_sf) \Big|\Big|^2_{\mathcal{H}_s} =\\
 \sum_{i=1}^d\lambda^i_s \Big<J^i_s(A_sf),J^i_s({\mathcal{T}^s}^{\dagger}Y) \Big>_{\mathcal{H}_s}
\leq  \sum_{i=1}^d\lambda^i_s ||J^i_s(A_sf)||_{\mathcal{H}_s} ||J^i_s({\mathcal{T}^s}^{\dagger}Y)||_{\mathcal{H}_s}
\end{multline}
Which directly leads to the inequality
\begin{equation}
\label{ineq_lam1}
\frac{1}{n}||Y -\mathcal{T}^s(A_sf) ||_2^2 \leq \sum_{i=1}^d\lambda^i_s ||J^i_s({\mathcal{T}^s}^{\dagger}Y) ||^2_{\mathcal{H}_s} \leq \lambda^{\infty}_s \sum_{i = 1}^d  ||J^i_s({\mathcal{T}^s}^{\dagger}Y) ||^2_{\mathcal{H}_s}
\end{equation}
Also, putting $\tilde{f} =A_sf - (A_sf)_{0}$ in (\ref{norm_relation}) and using (\ref{pyth}), we additionally get
\begin{equation}
\label{ineq_lam2}
||A_sf - (A_sf)_{0}||^2_{\mathcal{H}_s} \leq g^2||\mathcal{T}^s(A_sf) - \mathcal{T}^s(A_sf)_{0}||_2^2 \leq g^2||Y - \mathcal{T}^s(A_sf)||_2^2
\end{equation}
Using (\ref{ineq_lam1}) and (\ref{ineq_lam2}), the first statement of the theorem follows
\[
||A_sf - (A_sf)_{0}||^2_{\mathcal{H}_s} \leq ng^2 \lambda^{\infty}_s \sum_{i = 1}^d  ||J^i_s({\mathcal{T}^s}^{\dagger}Y) ||^2_{\mathcal{H}_s}
\]
For the second result, we begin with

\begin{align*}
\sum_{i=1}^d \lambda_s^i||J^i_s(A_sf - {\mathcal{T}^s}^{\dagger}Y)||^2_{\mathcal{H}_s} = \sum_{i=1}^d \lambda_s^i||J^i_s(A_sf) ||^2_{\mathcal{H}_s}  +\sum_{i=1}^d \lambda_s^i||J^i_s({\mathcal{T}^s}^{\dagger}Y) ||^2_{\mathcal{H}_s} - \\
2\sum_{i=1}^d \lambda_s^i\Big<J^i_s(A_sf),J^i_s({\mathcal{T}^s}^{\dagger}Y) \Big>_{\mathcal{H}_s} 
\end{align*}
On rearranging and using (\ref{rearrange}), we get
\begin{align*}
\sum_{i=1}^d \lambda_s^i||J^i_s(A_sf - {\mathcal{T}^s}^{\dagger}Y)||^2_{\mathcal{H}_s} = \sum_{i=1}^d \lambda_s^i||J^i_s(A_sf) ||^2_{\mathcal{H}_s}  +\sum_{i=1}^d \lambda_s^i||J^i_s({\mathcal{T}^s}^{\dagger}Y) ||^2_{\mathcal{H}_s} -\\
{2}\Bigg[\frac{1}{n}||Y -\mathcal{T}^s(A_sf) ||_2^2 + 
\sum_{i = 1}^d\lambda^i_s||J^i_s(A_sf) ||^2_{\mathcal{H}_s} \Bigg]
\end{align*}
On further solving and normalizing by $\lambda_s^{\infty}$ we get
\begin{align*}
\frac{2}{n \lambda_s^{\infty}}||Y -\mathcal{T}^s(A_sf) ||_2^2
= \sum_{i=1}^d \frac{\lambda_s^i}{\lambda_s^{\infty}}||J^i_s({\mathcal{T}^s}^{\dagger}Y) ||^2_{\mathcal{H}_s} - \\ \sum_{i=1}^d \frac{\lambda_s^i}{\lambda_s^{\infty}}||J^i_s(A_sf) ||^2_{\mathcal{H}_s} -
\sum_{i=1}^d \frac{\lambda_s^i}{\lambda_s^{\infty}}   ||J^i_s(A_sf - {\mathcal{T}^s}^{\dagger}Y)||^2_{\mathcal{H}_s} 
\end{align*}
Using (\ref{pyth})
\begin{align*}
\label{converge1}
\frac{2}{n \lambda_s^{\infty}}||\mathcal{T}^s(A_sf) -\mathcal{T}^s(A_sf)_0 ||_2^2
\leq \sum_{i=1}^d \frac{\lambda_s^i}{\lambda_s^{\infty}}||J^i_s({\mathcal{T}^s}^{\dagger}Y) ||^2_{\mathcal{H}_s} - \\ \sum_{i=1}^d \frac{\lambda_s^i}{\lambda_s^{\infty}}||J^i_s(A_sf) ||^2_{\mathcal{H}_s} -
\sum_{i=1}^d \frac{\lambda_s^i}{\lambda_s^{\infty}}   ||J^i_s(A_sf - {\mathcal{T}^s}^{\dagger}Y)||^2_{\mathcal{H}_s} 
\end{align*}
Now, looking at R.H.S of equation above and using (\ref{norm_relation})
\[
||J^i_s(A_sf - {\mathcal{T}^s}^{\dagger}Y)||_{\mathcal{H}_s} \leq ||J^i_s||_{\mathcal{H}_s}||{\mathcal{T}^s}^{\dagger}Y - A_sf||_{\mathcal{H}_s} \leq g||J^i_s||_{\mathcal{H}_s} ||Y - \mathcal{T}^s(A_sf)||_2
\]
Thus with $\lambda^{\infty}_s \to 0$ from part previous result $||J^i_s(A_sf - {\mathcal{T}^s}^{\dagger}Y)||_{\mathcal{H}_s} \to 0$. Also 
\[
||J^i_s({\mathcal{T}^s}^{\dagger}Y) ||_{\mathcal{H}_s}- ||J^i_s(A_sf) ||_{\mathcal{H}_s}  \leq ||J^i_s(A_sf)-J^i_s({\mathcal{T}^s}^{\dagger}Y) ||_{\mathcal{H}_s}
\]
Now, since $\lambda^i_s$ tends to 0 faster than $\lambda^{\infty}_s$, therefore we conclude
\[\frac{2}{n \lambda_s^{\infty}}||\mathcal{T}^s(A_sf) -\mathcal{T}^s(A_sf)_0 ||_2^2 \to\ 0\ as\ \lambda^{\infty}_s \to 0\]
Hence the proof follows
\end{proof}

\subsection{Bounding the approximation behavior}
In this section we analyze the behavior of the approximation produced at individual scales. We provide three results consisting of bounds on the (i) scale dependent approximation operator $A_s$ (ii) scale dependent approximation at a point $A_sf(x)$ (iii) scale dependent mean approximation error at a point $Error(x) = |f(x) - \mathbb{E}[A_sf](x)|$. The goal is to show that our formulation behaves in a stable manner as the model is trained to learn from data.

The starting result provides a bound for the approximation at any scale s with respect to the $L_{\infty}$ topology for a compact domain $\Omega \in \R^d$

\begin{theorem}
\label{stab2}
The approximation $A_sf$ has a $L_{\infty}$ upper bound
\begin{equation}
||A_sf||_{L_{\infty}} \leq P^s_\infty ||Y||_{\infty}
\end{equation}
with $P^s_{\infty}$ following the bounds
\begin{equation}
||U(\Lambda_s)M_0||_2 \leq P^s_{\infty} \leq ||U(\Lambda_s)M_0||_1
\end{equation}
where $U(\Lambda_s)$ is defined in (\ref{ulam}) and $M_0$ is from corollary \ref{corr} 
\end{theorem}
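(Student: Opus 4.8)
The plan is to combine the inner-product representation of $A_sf$ from Theorem~\ref{thcost} with the explicit form of $\hat M_0$ from Corollary~\ref{corr}, and then convert the resulting bilinear expression into the two asserted norm estimates through H\"older duality and the elementary comparison $\|v\|_2 \le \|v\|_1$.

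First I would recall from Theorem~\ref{thcost} that for every $x\in\Omega$ one has $(A_sf)(x) = \langle Y,\hat M_{\Lambda_s}(x)\rangle$ with $\hat M_{\Lambda_s}(x) = B^s\big({B^s}^TB^s + n\mathcal{P}^Q_s\big)^{-1}R_s(x)|_{X_s}$. The pivotal algebraic observation is the identity $\hat M_{\Lambda_s}(x) = U(\Lambda_s)\,\hat M_0(x)$, where $\hat M_0(x) = B^s({B^s}^TB^s)^{-1}R_s(x)|_{X_s}$ is the unpenalized limit from Corollary~\ref{corr}: substituting $U(\Lambda_s) = B^s\big({B^s}^TB^s + n\mathcal{P}^Q_s\big)^{-1}{B^s}^T$ from (\ref{ulam}) and cancelling $({B^s}^TB^s)({B^s}^TB^s)^{-1} = I$ yields it at once; equivalently, writing $M_0 = B^s({B^s}^TB^s)^{-1}{B^s}^T$ for the orthogonal projector onto the span of $B^s$, one has $U(\Lambda_s)M_0 = U(\Lambda_s)$. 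Hence $(A_sf)(x) = \langle Y, U(\Lambda_s)\hat M_0(x)\rangle$ for all $x\in\Omega$, and since the Gaussian kernel is continuous on the compact set $\Omega$ the map $x\mapsto U(\Lambda_s)\hat M_0(x)$ is bounded.

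For the upper bound I would apply H\"older's inequality in the $\ell_\infty$--$\ell_1$ pairing, $|(A_sf)(x)| \le \|Y\|_\infty\,\|U(\Lambda_s)\hat M_0(x)\|_1$, and then take the supremum over $x\in\Omega$, which I abbreviate as $\|U(\Lambda_s)M_0\|_1$; this gives $\|A_sf\|_{L_\infty} \le \|U(\Lambda_s)M_0\|_1\,\|Y\|_\infty$, i.e. $P^s_\infty \le \|U(\Lambda_s)M_0\|_1$. For the lower bound I would note that the H\"older estimate is sharp: for each fixed $x$, the choice of a sign vector $Y$ with $\|Y\|_\infty = 1$ matching the signs of $U(\Lambda_s)\hat M_0(x)$ attains $\langle Y, U(\Lambda_s)\hat M_0(x)\rangle = \|U(\Lambda_s)\hat M_0(x)\|_1$, so the best admissible constant equals $P^s_\infty = \sup_{x\in\Omega}\|U(\Lambda_s)\hat M_0(x)\|_1$; applying $\|v\|_2 \le \|v\|_1$ pointwise in $x$ and taking suprema then delivers $\|U(\Lambda_s)M_0\|_2 \le P^s_\infty$.

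The step I expect to be the main obstacle is not analytic but one of interpretation and bookkeeping: pinning down precisely what $\|U(\Lambda_s)M_0\|_2$ and $\|U(\Lambda_s)M_0\|_1$ denote (suprema over $\Omega$ of the Euclidean and $\ell_1$ norms of $U(\Lambda_s)\hat M_0(x)$, equivalently the induced $2$- and $1$-norms of the projector form of $M_0$ when one restricts to the data set), and making the cancellation behind $\hat M_{\Lambda_s}(x) = U(\Lambda_s)\hat M_0(x)$ airtight for arbitrary $x\in\Omega$ rather than only at data points --- which requires carrying along the restriction $(\cdot)|_{X_s}$ and the column-pivoted QR permutation $P_{qr}$ exactly as in the proof of Theorem~\ref{thcost}. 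Once that is settled, the two inequalities follow immediately from $\ell_\infty$--$\ell_1$ duality and the comparison $\|\cdot\|_2 \le \|\cdot\|_1$.
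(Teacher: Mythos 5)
Your first step matches the paper's: both bound $\|A_sf\|_{L_\infty}\le P^s_\infty\|Y\|_\infty$ with $P^s_\infty=\max_{x\in\Omega}\sum_{j}|M^j_{\Lambda_s}(x)|$ via the triangle/H\"older inequality applied to the representation of Theorem \ref{thcost}, and your identity $\hat M_{\Lambda_s}(x)=U(\Lambda_s)\hat M_0(x)$ is exactly the factorization through ${B^s}^TB^s({B^s}^TB^s)^{-1}$ that the paper inserts in its own computation. The divergence is in how the two bounds on $P^s_\infty$ are produced. The paper does not read $\|U(\Lambda_s)M_0\|_1$ and $\|U(\Lambda_s)M_0\|_2$ as suprema over $x\in\Omega$ of $\|U(\Lambda_s)\hat M_0(x)\|_{1,2}$; it first converts the pointwise evaluation at the maximizer $x^*$ into a point-free quantity by Cauchy--Schwarz in the dual pairing, $|M^j_{\Lambda_s}(x^*)|=|\delta^s_{x^*}(M^j_{\Lambda_s})|\le\|\delta^s_{x^*}\|_{\mathcal{H}^*_s}\,\|M^j_{\Lambda_s}\|_{\mathcal{H}_s}$, uses the Gaussian normalization $K^s(x,x)=1$ so that $\|\delta^s_{x^*}\|_{\mathcal{H}^*_s}=1$, computes $\|M^j_{\Lambda_s}\|_{\mathcal{H}_s}=|\langle U_j(\Lambda_s),M_0\rangle|$ from the closed form of Theorem \ref{thcost}, and only then sums over $j$ (for the upper bound) or takes the Euclidean norm (for the lower bound, where the $\ell_2\le\ell_1$ comparison you invoke also appears).

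This difference is where your proposal has a genuine gap relative to the paper. Under your reading, the upper bound is an equality by definition --- you have in effect renamed $P^s_\infty$ as $\|U(\Lambda_s)M_0\|_1$ --- and the lower bound is then the elementary $\ell_2\le\ell_1$ estimate, so the second display of Theorem \ref{stab2} carries no content beyond the first; under the paper's reading the right-hand sides are fixed, point-independent quantities built from the RKHS norms of the functions $M^j_{\Lambda_s}$, and your sup-over-$\Omega$ quantities are not obviously the same objects. The bridge you would need is precisely the normalized-evaluation-functional Cauchy--Schwarz step ($\|\delta^s_x\|_{\mathcal{H}^*_s}=1$) together with the identification $\|M^j_{\Lambda_s}\|_{\mathcal{H}_s}=|\langle U_j(\Lambda_s),M_0\rangle|$, neither of which appears in your argument; you flag the interpretive ambiguity but resolve it by redefining the right-hand side rather than by proving the paper's estimate. (Your side remark that $U(\Lambda_s)M_0=U(\Lambda_s)$ when $M_0$ is taken to be the projector $B^s({B^s}^TB^s)^{-1}{B^s}^T$ is algebraically correct, but it again changes what is being bounded.) To align with the paper, keep your correct identity $\hat M_{\Lambda_s}=U(\Lambda_s)\hat M_0$, then bound $|M^j_{\Lambda_s}(x)|$ by $\|M^j_{\Lambda_s}\|_{\mathcal{H}_s}$ via the unit-norm Riesz representer and evaluate that RKHS norm explicitly before summing.
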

\begin{proof}
We begin with the definition of approximation $A_sf$ expressed as an inner product as in Theorem \ref{thcost}
\begin{align*}
||A_sf||_{L_{\infty}} &= \max_{x \in \Omega} |A_sf(x)| = \max_{x \in \Omega} \Big|\sum_{x_j \in X} y_j M^j_{\Lambda_s}(x)  \Big| \leq \max_{x \in \Omega}\sum_{x_j \in X} |y_j M^j_{\Lambda_s}(x) | \\
& \leq \max_{x \in \Omega}\sum_{x_j \in X} |y_j | \cdot |M^j_{\Lambda_s}(x) | \leq P^s_{\infty} ||Y||_{\infty}  \quad \text{ where } P^s_{\infty} =\max_{x \in \Omega} \sum_{j = 1}^n |M^j_{\Lambda_s}(x) |
\end{align*}

Now, for establishing bounds on $P^s_{\infty}$, we proceed as follows. Let $x^* \in \Omega$ be the data point at which the $\sum_{j =1}^n|M^j_{\Lambda_s}(x)|$ is maximized.
\begin{align*}
P^s_{\infty} = \sum_{j = 1}^n|M^j_{\Lambda_s}(x^*)| = \sum_{j = 1}^n|\delta^s_{x^*}  M^j_{\Lambda_s}| \leq \sum_{j = 1}^n ||\delta^s_{x^*}||_{\mathcal{H}^{*}_s} ||M^j_{\Lambda_s}||_{\mathcal{H}_s} =  \sum_{j = 1}^n||M^j_{\Lambda_s}||_{\mathcal{H}_s} 
\end{align*}

The last equality here comes from the assumed normalization : $||\delta^s_x||^2_{{\mathcal{H}}^{*}_s} =1$. Using the expression for $M^j_{\Lambda_s}$ from Theorem \ref{thcost}. 
\begin{multline}
<M^j_{\Lambda_s},M^j_{\Lambda_s}>_{\mathcal{H}_s} =
 e_j^T B^s({B^s}^TB^s + n\mathcal{P}_s^Q)^{-1}R_s|_{X_s} R_s^T|_{X_s}({B^s}^TB^s + n \mathcal{P}_s^Q)^{-1}{B^s}^Te_j
\end{multline}
\begin{multline}\label{mjmj}
= e_j^T B^s({B^s}^TB^s + n\mathcal{P}_s^Q)^{-1}{B^s}^TB^s ({B^s}^TB^s)^{-1}R_s|_{X_s}\\
 R_s^T|_{X_s}{B^s}^TB^s ({B^s}^TB^s)^{-1}({B^s}^TB^s + n \mathcal{P}_s^Q)^{-1}{B^s}^Te_j
\end{multline}
Now realizing 
\[
B^s({B^s}^TB^s + n\mathcal{P}_s^Q)^{-1}{B^s}^T = U(\Lambda_s)
\quad \text{and} \quad
B^s ({B^s}^TB^s)^{-1}R_s|_{X_s} = M_{0} 
\]
we get
\[
<M^j_{\Lambda_s},M^j_{\Lambda_s}>_{\mathcal{H}_s} =e_j^TU(\Lambda_s)M_0M_0^TU^T(\Lambda_s)e_j
\]
If, $U_j(\Lambda_s)$ represents the $j^{th}$ influence vector, then we get
\[
<M^j_{\Lambda_s},M^j_{\Lambda_s}>_{\mathcal{H}_s} =|<U_j(\Lambda_s),M_0>|^2
\implies
||M^j_{\Lambda_s}||_{\mathcal{H}_s} =|<U_j(\Lambda_s),M_0>|
\]
Therefore we get the upper bound on $P^s_{\infty}$ as
\begin{equation}
P^s_{\infty} \leq \sum_{j = 1}^n|<U_j(\Lambda_s),M_0>| =||U(\Lambda_s)M_0||_1
\end{equation}
For computing the lower bound, we again begin with the fact that,
\[
P^s_{\infty} =  \max_{x \in \Omega_s} ||M_{\Lambda_s}(x)||_1 \geq  \max_{x \in \Omega_s} ||M_{\Lambda_s}(x)||_2 = ||M_{\Lambda_s}||_2
\]
However from the computations for upper bound and Theorem \ref{thcost}, we infer
\[
<M_{\Lambda_s},M_{\Lambda_s}>_{\mathcal{H}_s}
=||U(\Lambda_s)M_0||^2_2
\]
Thus establishing the stated theorem
\end{proof}
Proceeding further we provide a result which bounds the approximation produced by the proposed approach at any data point $x \in \Omega_x$ and scale s

\begin{corollary}
The approximation at any $x \in \Omega_x$ is  bounded in the sense 

\begin{equation}
|A_sf(x)| \leq ||U(\Lambda_s)M_0||_1||Y||_{\infty}
\end{equation}
\end{corollary}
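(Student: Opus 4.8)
The plan is to read this statement off from Theorem~\ref{stab2} essentially for free, since it is just the pointwise form of the $L_\infty$ bound already established there. First I would recall the inner product representation of the approximation from Theorem~\ref{thcost}, namely $A_sf(x) = <Y,\hat{M}_{\Lambda_s}(x)> = \sum_{x_j \in X} y_j M^j_{\Lambda_s}(x)$, which holds at every $x \in \Omega_x$. Bounding each $|y_j|$ by $||Y||_\infty$ and applying the triangle inequality gives the pointwise estimate $|A_sf(x)| \le ||Y||_\infty \sum_{j=1}^n |M^j_{\Lambda_s}(x)|$.

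Next I would invoke the quantity $P^s_\infty = \max_{x' \in \Omega}\sum_{j=1}^n |M^j_{\Lambda_s}(x')|$ introduced in the proof of Theorem~\ref{stab2}: by its very definition $\sum_{j=1}^n |M^j_{\Lambda_s}(x)| \le P^s_\infty$ for the particular point $x$, since $\Omega$ is the compact domain over which the maximum is taken and contains $x$. Chaining the two inequalities gives $|A_sf(x)| \le P^s_\infty\, ||Y||_\infty$ for every $x \in \Omega_x$.

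Finally, Theorem~\ref{stab2} supplies the upper bound $P^s_\infty \le ||U(\Lambda_s)M_0||_1$, obtained there via the identities $B^s({B^s}^TB^s + n\mathcal{P}^Q_s)^{-1}{B^s}^T = U(\Lambda_s)$ and $B^s({B^s}^TB^s)^{-1}R_s|_{X_s} = M_0$ together with $||M^j_{\Lambda_s}||_{\mathcal{H}_s} = |<U_j(\Lambda_s),M_0>|$. Substituting this into the previous estimate yields $|A_sf(x)| \le ||U(\Lambda_s)M_0||_1\, ||Y||_\infty$, as claimed. Equivalently, one can compress the argument to the single chain $|A_sf(x)| \le ||A_sf||_{L_\infty} \le P^s_\infty ||Y||_\infty \le ||U(\Lambda_s)M_0||_1 ||Y||_\infty$.

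There is no genuine obstacle here: the corollary is literally a pointwise restatement of Theorem~\ref{stab2}, and the only point requiring a moment's care is observing that the maximum defining $P^s_\infty$ ranges over a set containing the given $x$, so the bound transfers to that point.
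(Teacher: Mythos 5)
Your proposal is correct and matches the paper's own proof: both start from the inner product representation $A_sf(x)=\sum_j y_j M^j_{\Lambda_s}(x)$ from Theorem~\ref{thcost}, apply the triangle inequality with $|y_j|\leq \|Y\|_{\infty}$, and then invoke the upper bound $P^s_{\infty}\leq \|U(\Lambda_s)M_0\|_1$ already established in Theorem~\ref{stab2}. No further comment is needed.
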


\begin{proof}
The proof follows similar steps to the previous theorem. Beginning with the inner product representation of the approximation
\begin{align*}
|A_sf(x)|  &= \sum_{j = 1}^n|y_jM^j_{\Lambda_s}(x)| \leq  \sum_{j = 1}^n|y_j|\cdot|M^j_{\Lambda_s}(x)| \leq ||Y||_{\infty} \sum_{j=1}^n |M^j_{\Lambda_s}(x)|
\end{align*}

Thus, by referring to the upper bound in Theorem \ref{stab2}, the result follows
\end{proof}

Now, as stated earlier, we provide bounds for the error in approximation at any new data point 

\begin{theorem}
The pointwise approximation error (in definition \ref{def43}) for any $x \in \Omega_x$
\begin{equation}
Error(x) = |f(x) - \mathbb{E}[A_sf](x)| = |\delta^s_x(f) - M^T_{\Lambda_s}(x)\delta^s_X (f)| = |E^x_{\Lambda_s}(f)|
\end{equation}
follows the upper bound
\[
Error(x) \leq (1-a) ||f||_{\mathcal{H}_s}
\]

Where $a = M_0^T(x)U(\Lambda_s)R_s(x)$
\end{theorem}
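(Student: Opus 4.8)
The plan is to rewrite the pointwise error as the action of the functional $E^x_{\Lambda_s}$ on $f$, bound it by Cauchy--Schwarz in the dual RKHS $\mathcal H^{*}_s$, and then compute $\|E^x_{\Lambda_s}\|_{\mathcal H^{*}_s}$ using the optimality characterisation of $\hat M_{\Lambda_s}(x)$ already obtained in Theorem \ref{thcost}.

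First, by Definition \ref{def43}, $Error(x)=|E^x_{\Lambda_s}(f)|$ with $E^x_{\Lambda_s}\in\mathcal H^{*}_s$ and $f\in\mathcal H_s$, so Riesz representation plus Cauchy--Schwarz give $Error(x)\le\|E^x_{\Lambda_s}\|_{\mathcal H^{*}_s}\,\|f\|_{\mathcal H_s}$; it then suffices to bound the dual norm by $1-a$. Expanding it exactly as in the proof of Theorem \ref{thcost} --- using $\langle\delta^s_a,\delta^s_b\rangle_{\mathcal H^{*}_s}=K^s(a,b)$, the normalisation $\|\delta^s_x\|^2_{\mathcal H^{*}_s}=K^s(x,x)=1$ (automatic for the squared exponential kernel), and the abbreviations $R_s(x)=\delta^s_X\delta^s_x$, $G_s=\delta^s_X{\delta^s_X}^{T}$ --- one gets, at the minimiser $\hat M_{\Lambda_s}(x)=B^s({B^s}^{T}B^s+n\mathcal P^Q_s)^{-1}R_s(x)|_{X_s}$, the identity $\|E^x_{\Lambda_s}\|^2_{\mathcal H^{*}_s}=1-2\,\hat M_{\Lambda_s}^{T}(x)R_s(x)+\hat M_{\Lambda_s}^{T}(x)G_s\hat M_{\Lambda_s}(x)$.

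Next I would identify the term $a$. Substituting the closed form of $\hat M_{\Lambda_s}(x)$ and using $U(\Lambda_s)=B^s({B^s}^{T}B^s+n\mathcal P^Q_s)^{-1}{B^s}^{T}$ from (\ref{ulam}) together with $M_0(x)=B^s({B^s}^{T}B^s)^{-1}R_s(x)|_{X_s}$ from Corollary \ref{corr}, the factor $({B^s}^{T}B^s)^{-1}{B^s}^{T}B^s$ cancels and the linear term collapses to $\hat M_{\Lambda_s}^{T}(x)R_s(x)=M_0^{T}(x)U(\Lambda_s)R_s(x)=a$. For the quadratic term, I would feed in the normal equation (\ref{syseq}) and the operator relation (\ref{relation}) --- essentially $G_s\hat M_{\Lambda_s}(x)=R_s(x)-T_s\hat M_{\Lambda_s}(x)$ on the $l_s$-dimensional subspace sampled by the column-pivoted QR of Theorem \ref{thcost} --- so that $\hat M_{\Lambda_s}^{T}(x)G_s\hat M_{\Lambda_s}(x)=a-\|\hat M_{\Lambda_s}(x)\|^2_{T_s}$ with $\|\cdot\|_{T_s}$ the seminorm of (\ref{seminorm1}). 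Since $T_s\succeq0$ one gets $\|E^x_{\Lambda_s}\|^2_{\mathcal H^{*}_s}=1-a-\|\hat M_{\Lambda_s}(x)\|^2_{T_s}\le1-a$, while $a\in[0,1]$ follows from $\mathcal P^Q_s\succeq0$ (so that $({B^s}^{T}B^s+n\mathcal P^Q_s)^{-1}\preceq({B^s}^{T}B^s)^{-1}$) and a standard kernel inequality of the form $M_0^{T}(x)R_s(x)\le K^s(x,x)=1$; combined with the Cauchy--Schwarz step this yields $Error(x)\le(1-a)\|f\|_{\mathcal H_s}$.

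The delicate points --- and where I expect the real work to lie --- are twofold. Because $G_s$ is rank-deficient at scale $s$, the simplification of the quadratic term cannot be done by inverting $G_s$ directly; it has to be carried out on $\mathrm{range}(B^s)$ via the same sampling of system (\ref{syseq}) used in Theorem \ref{thcost}, and one must check that relation (\ref{relation}) is invoked only there. More substantively, the dual-norm computation only delivers a bound on $\|E^x_{\Lambda_s}\|^2_{\mathcal H^{*}_s}$, whereas the statement uses $1-a$ directly as a bound on $\|E^x_{\Lambda_s}\|_{\mathcal H^{*}_s}$; closing that gap requires sharpening the quadratic term to $\hat M_{\Lambda_s}^{T}(x)G_s\hat M_{\Lambda_s}(x)\le a(2-a)$ rather than just $\le a$, which is the step I would treat as the main obstacle and where the interlacing inequality for $({B^s}^{T}B^s+n\mathcal P^Q_s)^{-1}$ and the fact that $B^s$ consists of columns of $G_s$ must both be used carefully.
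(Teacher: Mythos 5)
Your skeleton matches the paper's proof up to the decisive step: both bound $Error(x)=|E^x_{\Lambda_s}(f)|\le \|E^x_{\Lambda_s}\|_{\mathcal{H}^{*}_s}\,\|f\|_{\mathcal{H}_s}$ by Cauchy--Schwarz, expand $\|E^x_{\Lambda_s}\|^2_{\mathcal{H}^{*}_s}=1-2\hat M^T_{\Lambda_s}(x)R_s(x)+\hat M^T_{\Lambda_s}(x)G_s\hat M_{\Lambda_s}(x)$ at the minimizer from Theorem \ref{thcost}, and identify the cross term with $a=M_0^T(x)U(\Lambda_s)R_s(x)$ exactly as the paper does. The divergence, and the genuine gap, is in the quadratic term. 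Your route via the normal equations and $T_s\succeq 0$ yields $\hat M^T_{\Lambda_s}(x)G_s\hat M_{\Lambda_s}(x)= a-\|\hat M_{\Lambda_s}(x)\|^2_{T_s}\le a$, hence only $\|E^x_{\Lambda_s}\|^2_{\mathcal{H}^{*}_s}\le 1-a$, i.e. $Error(x)\le \sqrt{1-a}\,\|f\|_{\mathcal{H}_s}$. Since your own argument places $a\in[0,1]$, we have $\sqrt{1-a}\ge 1-a$, so this is strictly weaker than the claimed bound and does not prove the theorem. You flag this yourself, which is good, but the repair target you name is miscalibrated: $\hat M^T_{\Lambda_s}(x)G_s\hat M_{\Lambda_s}(x)\le a(2-a)$ would give $\|E^x_{\Lambda_s}\|^2_{\mathcal{H}^{*}_s}\le 1-a^2$, which still does not imply $\|E^x_{\Lambda_s}\|_{\mathcal{H}^{*}_s}\le 1-a$; what is actually needed is $\hat M^T_{\Lambda_s}(x)G_s\hat M_{\Lambda_s}(x)\le a^2$.

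The paper closes exactly this step not by an inequality but by an exact evaluation: substituting $\hat M_{\Lambda_s}(x)=B^s({B^s}^TB^s+n\mathcal{P}_s^Q)^{-1}R_s(x)|_{X_s}$ and inserting $({B^s}^TB^s)^{-1}({B^s}^TB^s)$, it rewrites the quadratic term as $M_0^T(x)U(\Lambda_s)R_s(x)\,R_s^T(x)U^T(\Lambda_s)M_0(x)=a^2$ (using the structure of $B^s$ as sampled columns of $G_s$ in the same way as for the cross term), so that $\|E^x_{\Lambda_s}\|^2_{\mathcal{H}^{*}_s}=1-2a+a^2=(1-a)^2$ and the stated bound follows directly (with $a\le 1$ left implicit; your observation that this needs justification is a fair addition). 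Note also that your identity and the paper's cannot both be taken as black boxes without reconciliation: together they would force $\|\hat M_{\Lambda_s}(x)\|^2_{T_s}=a(1-a)$, and since $\hat M_{\Lambda_s}(x)$ solves only the sampled, rank-$l_s$ reduction of system (\ref{syseq}) rather than the full system, your use of $G_s\hat M_{\Lambda_s}(x)=R_s(x)-T_s\hat M_{\Lambda_s}(x)$ needs the same sampling argument you mention, not just a caveat. As written, your proposal establishes only the weaker $\sqrt{1-a}$ bound; to obtain the theorem you must replace the one-sided bound on the quadratic term by the paper's perfect-square computation.
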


\begin{proof}
Starting with the the optimal value of $M_{\Lambda_s}(x)$ obtained in Theorem \ref{thcost}

\begin{equation}
    \label{stab4}
    M_{\Lambda_s}(x) = B^s({B^s}^TB^s + n \mathcal{P}_s^Q)^{-1}R(x)|_{X_s}
\end{equation}

Substituting it in the squared error functional norm

\begin{align*}
||E^x_{\Lambda_s}||^2_{\mathcal{H}^{*}_s}  &= < \delta^s_x - M^T_{\Lambda_s}(x)	 \delta^s_X,  \delta^s_x - M^T_{\Lambda_s}(x) \delta^s_X>_{\mathcal{H}^{*}_s}\\
&= ||\delta^s_x||^2_{{\mathcal{H}}^{*}_s} - 2M^T_{\Lambda_s}(x) \delta^s_X\delta^s_x + M^T_{\Lambda_s}(x) \delta^s_X{\delta^s_X}^T  M_{\Lambda_s}(x)\\
&=||\delta_x||^2_{{\mathcal{H}}^{*}_s} - 2M^T_{\Lambda_s}(x) R_s(x) + M^T_{\Lambda_s}(x) G_s M_{\Lambda_s}(x)
\end{align*}

Starting with the second term
\begin{align*}
M^T_{\Lambda_s}(x) R_s(x)  &= R(x)|^T_{X_s}({B^s}^TB^s + n \mathcal{P}_s^Q)^{-1}{B^s}^TR_s(x)\\
& = R(x)|^T_{X_s} ({B^s}^TB^s)^{-1}({B^s}^TB^s) ({B^s}^TB^s + n \mathcal{P}_s^Q)^{-1}{B^s}^TR_s(x)\\
&= M_0^T(x)U(\Lambda_s)R_s(x)
\end{align*}

Coming to the third term,$M^T_{\Lambda_s}(x) G_s M^s_{\Lambda_s}(x)$
\begin{align*}
&= R(x)|^T_{X_s}({B^s}^TB^s + n \mathcal{P}_s^Q)^{-1}{B^s}^TG_sB^s({B^s}^TB^s + n \mathcal{P}_s^Q)^{-1} R(x)|_{X_s}\\
&= M_0^T(x)U(\Lambda_s)R_s(x)R_s^T(x)U^T(\Lambda_s)M_0(x)
\end{align*}

Therefore 
\begin{equation}\label{opte}
||E^x_{\Lambda_s}||^2_{\mathcal{H}^{*}_s} = 1 - 2a + a^2 \quad \text{where }a = M_0^T(x)U(\Lambda_s)R_s(x)
\end{equation}

Now, coming back to the single point evaluation error representation as discussed earlier
\begin{equation}
Error(x) = |E^x_{\Lambda_s}(f)| \leq ||E^x_{\Lambda_s}||_{\mathcal{H}^{*}_s}||f||_{\mathcal{H}_s}
\end{equation}

Hence the result follows.
\end{proof}


\begin{figure}
\centering
\includegraphics[width=4.5in]{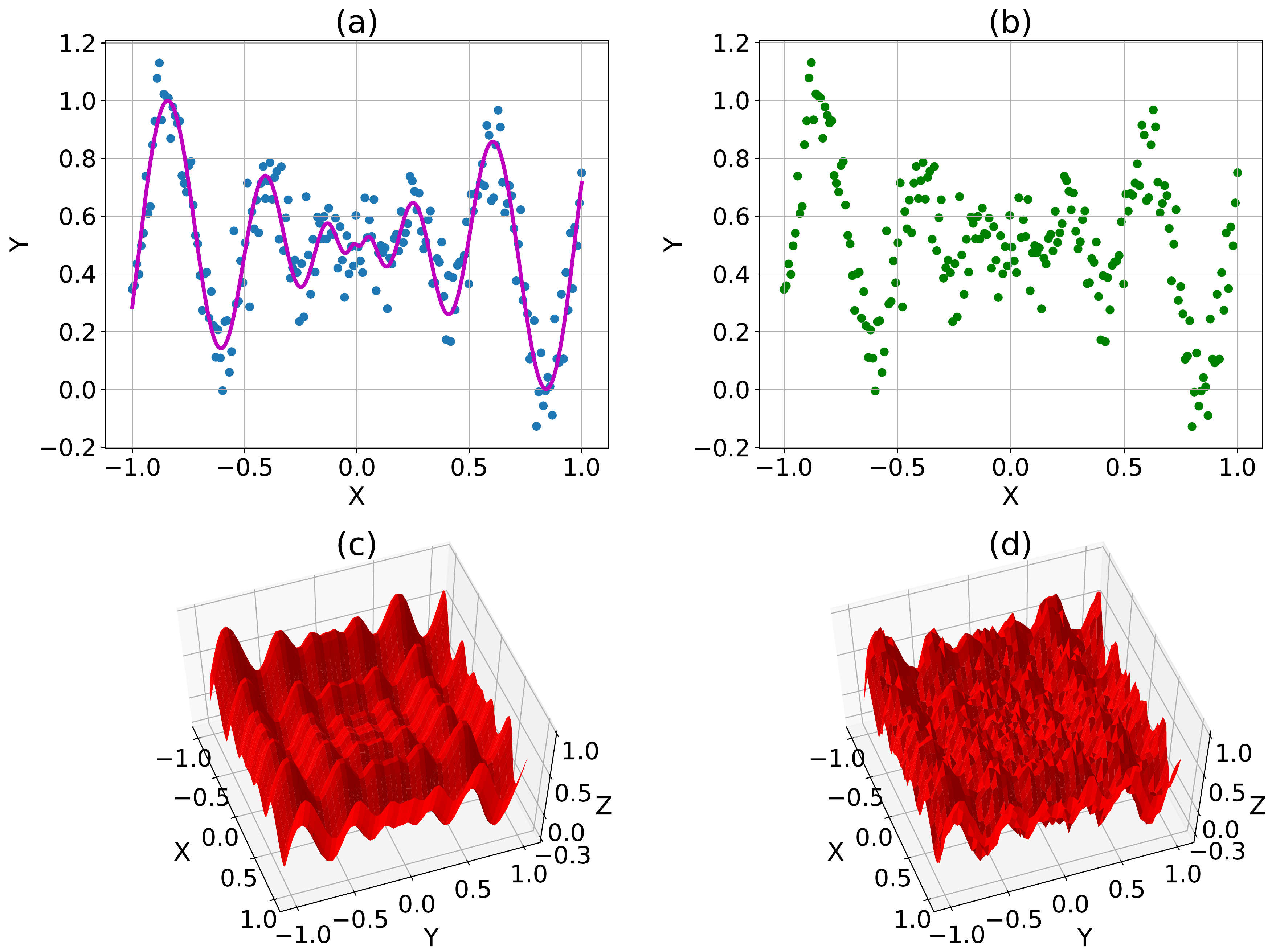}
\caption{{\color{red}(a) and (b)}: Univariate test function.
(a) shows the the 1-d Schwefel function along with the sampled noisy data. (b) here just shows this sampled data to give a visual intuition. {\color{red}(c) and (d)}: Multivariate test function. Here we show the bi-variate Bohachevsky function which we use for our analysis. Plot in (c) shows the true function whereas the plot (d) shows the noisy data sampled from it.}
\label{three_1}
\end{figure}

\begin{table}[]
    \centering
    \begin{tabular}{|c|c|c|c|c|c|c|c|}
         \hline
         \multicolumn{1}{|c|}{} &
         \multicolumn{3}{|c|}{} &
         \multicolumn{4}{|c|}{} \\
         \multicolumn{1}{|c|}{} &
         \multicolumn{3}{|c|}{\textbf{Univariate Function}} &
         \multicolumn{4}{|c|}{\textbf{Multivariate Function} }\\[3ex]
         \hline
          $Scale$ & $comp_s$ & $Cost_s$ & $q^{opt}$ & $comp_s$ & $Cost_s$ & $q_x^{opt}$ & $q_y^{opt}$\\ \hline
          \textit{0} & 0.94 & 4.99e-02 & 1 & 0.98 & 4.30e-02 & 1 & 1 \\ \hline
          \textit{1} & 0.93 & 3.26e-02 & 1 & 0.97 & 4.35e-02 & 1 & 1 \\ \hline
          \textit{2} & 0.92 & 2.22e-02 & 1 & 0.95 & 2.60e-02 & 1 & 1 \\ \hline
          \textit{3} & 0.90 & 1.83e-02 & 1 & 0.93 & 1.59e-02 & 1 & 1 \\ \hline
          \textit{4} & 0.87 & 1.13e-02 & 2 & 0.89 & 1.19e-02 & 1 & 1 \\ \hline
          \textit{5} & 0.82 & 1.09e-02 & 1 & 0.81 & 3.58e-03 & 2 & 1 \\ \hline
          \textit{6} & 0.77 & 1.10e-02 & 1 & 0.68 & 3.04e-03 & 1 & 1 \\ \hline
          \textit{7} & \textbf{0.68} & \textbf{1.06e-02} & \textbf{2} & \textbf{0.43} & \textbf{2.90e-03} & \textbf{2} & \textbf{1} \\ \hline
          \textit{8} & 0.57 & 1.07e-02 & 2 & 0.08 & 2.92e-03 & 1 & 1 \\ \hline
          \textit{9} & 0.40 & 1.10e-02 & 2 & 0.00 & 3.10e-03 & 1 & 2 \\ \hline
          \textit{10} & 0.18 & 1.13e-02 & 2 & - & - & - & - \\ \hline
          \textit{11} & 0.00 & 1.13e-02 & 2 & - & - & - & - \\ \hline
          
    \end{tabular}
    \caption{Performance of the proposed approach on Univariate (1d Schwefel) and Multivariate (Bohachevsky) test function. For Univariate test function, we have shown the compression ratio $comp_s$ (\ref{comp}), optimal cost at scale s (\ref{costgcv}) and optimal penalty order $q$ for all scales (0 to 11 as shown in Figure \ref{three_3}). For the Multivariate test function, the same analysis has been shown (with scales going from 0 to 9 as shown in Figure \ref{three_4}). The optimal penalties in X and Y direction is denoted by $q_x^{opt}$ and $q_y^{opt}$ respectively. Overall the scale with the minimum fitting cost ($Cost_s$) is highlighted (t = 7) for both cases.}
    \label{tab:table1}
\end{table}

\section{Results}

In this section, we present the results of the proposed hierarchical approach on univariate and multivariate synthetic datasets \cite{simulationlib} along with performance analysis on a time series dataset from remote sensing literature \cite{luthcke2013antarctica}. This makes sense as simulated datasets can test the modeling capability with respect to the truth and application on real datasets can test the behavior of the proposed method on the challenges which come with the real observations. 

Firstly we begin with the application on two test functions (shown in Figure \ref{three_1}). The univariate function here shows noisy data sampled from the 1-d Schwefel function \cite{simulationlib}(in (a) and (b)) . The non-convexity of this function coupled with sharp curvature changes is expected to pose a good challenge for any noisy data modeling procedure. The multivariate function here ((c) and (d)) pose similar challenges but in higher dimensions.

\begin{figure}
\centering
\includegraphics[width=5in]{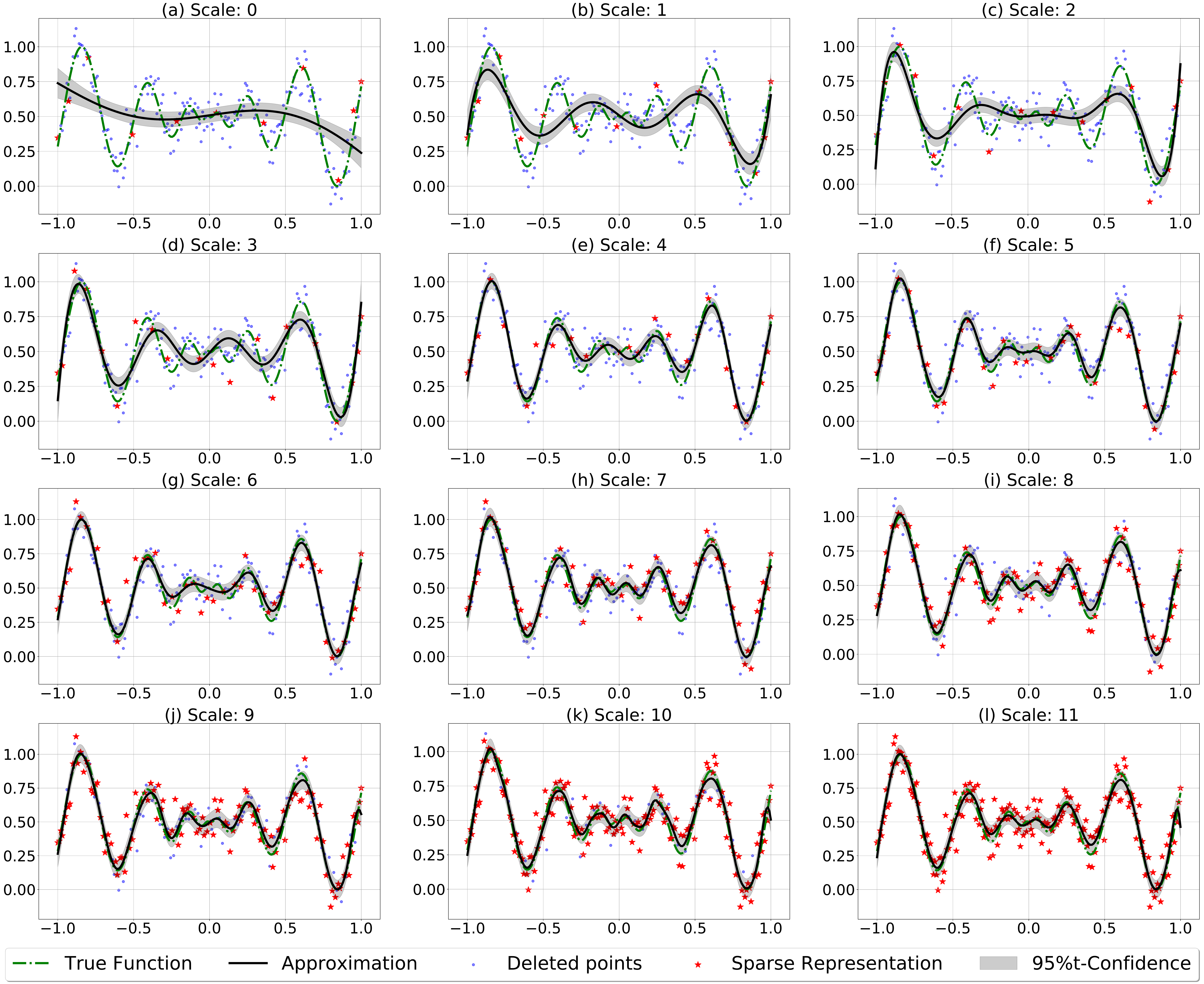}
\caption{Scale-wise performance and solution of the proposed approach on univariate test function. With smaller sparse representations, the approximation is oversmoothed at initial scales with noticeable improvement as the scale increases. Scale 7 here produces the best approximation. The legends are shown at the bottom of the figure.}
\label{three_3}
\end{figure}

\subsection{Understanding the behavior with scales}

Considering the univariate test function, Figure \ref{three_3} provides an intuitive understanding of the behavior of the approach across the scales. Here, starting with scale 0, we show that at each scale increment, more and more points are chosen in the sparse representation leading to the corresponding improvement in the produced approximation. Here we also compute compression ratio at scale s ($comp_s$) defined as
\begin{equation}
comp_s = 1 - \frac{l_s}{n} \quad ;(l_s\text{ is the cardinality of }X_s)
\label{comp}
\end{equation}

\begin{figure}
\centering
\includegraphics[width=4in]{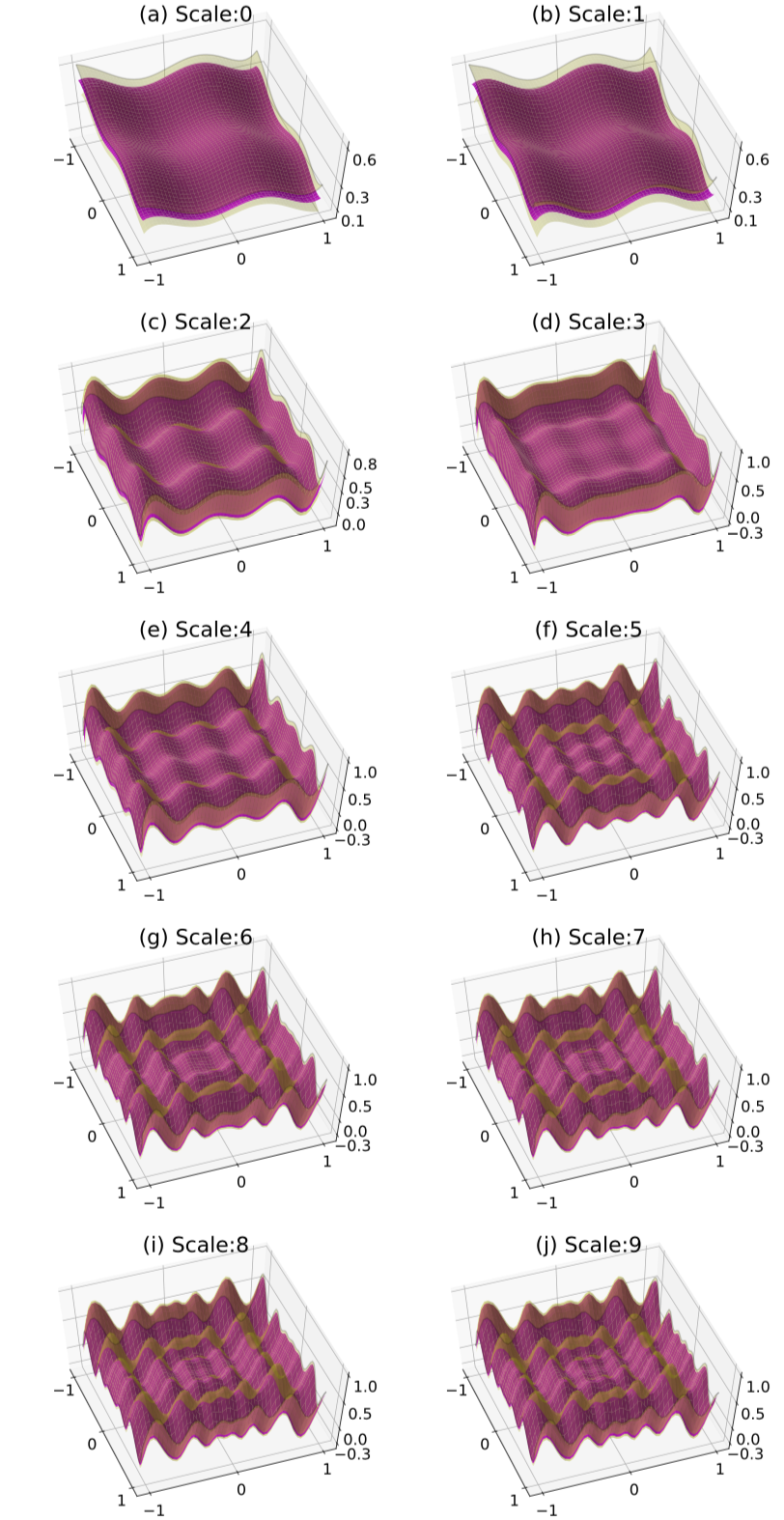}
\caption{Scale-wise performance and solution of proposed approach on bi-variate test function. The light surface above and below the mean approximation (magenta colored) shows the 95\% t-confidence intervals.}
\label{three_4}
\end{figure}

Therefore a value of $comp_s$ closer to 1 shows that very few observations were selected in the sparse representation and hence represents good compression being achieved. Starting with scale 0 (Figure \ref{three_3}), the cost of fitting quantified as the optimal GCV value was observed to achieve a minima at scale 7 (details are shown in Table \ref{tab:table1}), establishing it as the convergence scale (t). This is also evident from the quality of the approximation produced at scale 7 (in Figure \ref{three_3})). Moreover it should be noted that the cost of fitting at convergence scale was even less than cost of fitting with the full datasets (Table \ref{tab:table1}). This is intuitive since here we are trying to find a trade-off between model complexity and generalization capability.

\begin{figure}
\centering
\includegraphics[width=5in]{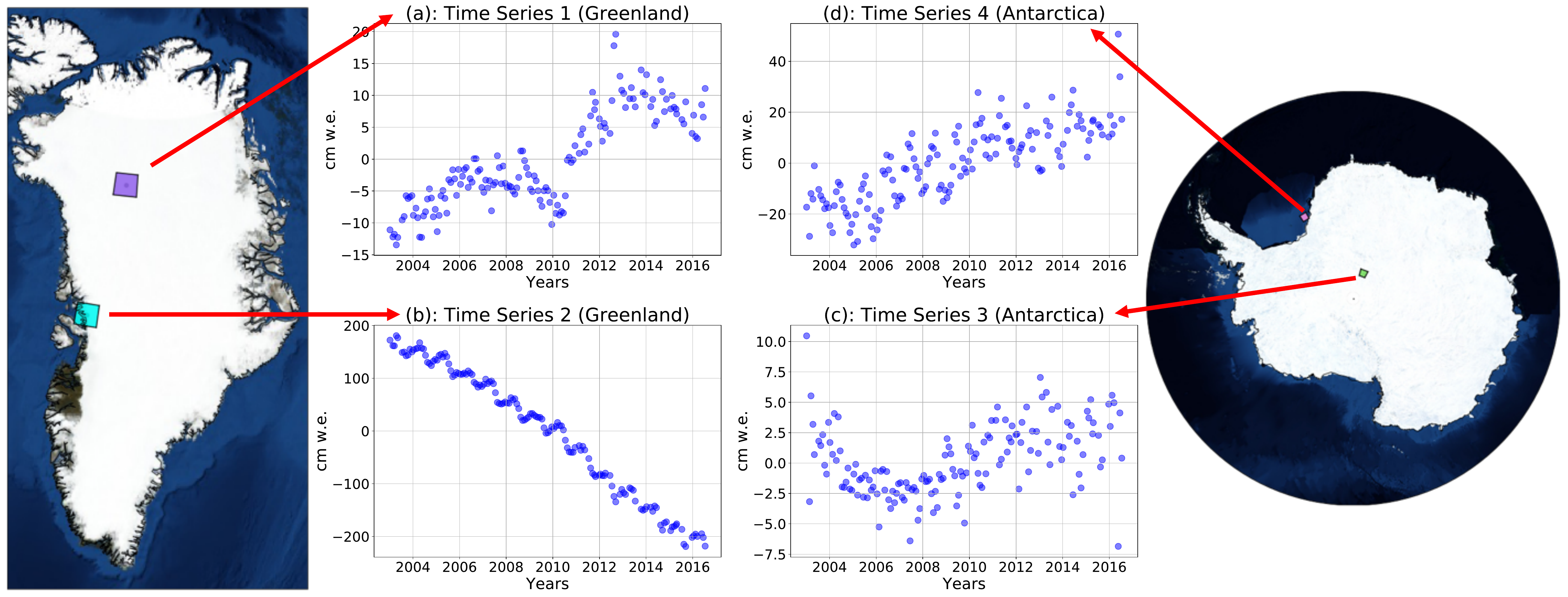}
\caption{Location of the test time series on the Greenland (time series 1 in (a) and 2 in (b)) and Antarctic (time series 3 in (c) and 4 in (d)) ice sheets. We have chosen the time series from both accumulation (near the center with more frequent snowing) and ablation zones (near the edge with higher degree of fluctuations and activity) of the ice sheets for testing the proposed approach}
\label{three_6}
\end{figure}

Moving forward with the bi-variate test case, here, we show a similar analysis in Figure \ref{three_4}. Here, the transparent surfaces  sandwiching the mean approximation show the $\pm$ $95 \% t$-confidence intervals.

For  better understanding of the performance and behavior of  the algorithm on the two test functions, we have presented the scalewise performance details in Table \ref{tab:table1}. Here we show the compression ratio (\ref{comp}) achieved with different scales along with the optimal penalty order chosen at each scale of analysis for both the test functions ($q^{opt}$ for univariate and $q_x^{opt}$, $q_y^{opt}$ for multivariate case respectively). It should be noted here that for the multivariate case (Table \ref{tab:table1}), we have shown the optimal penalty order in both X and Y direction (which does not necessarily have to be the same).

\subsection{Application on real data}

\begin{figure}
\centering
\includegraphics[width=5in]{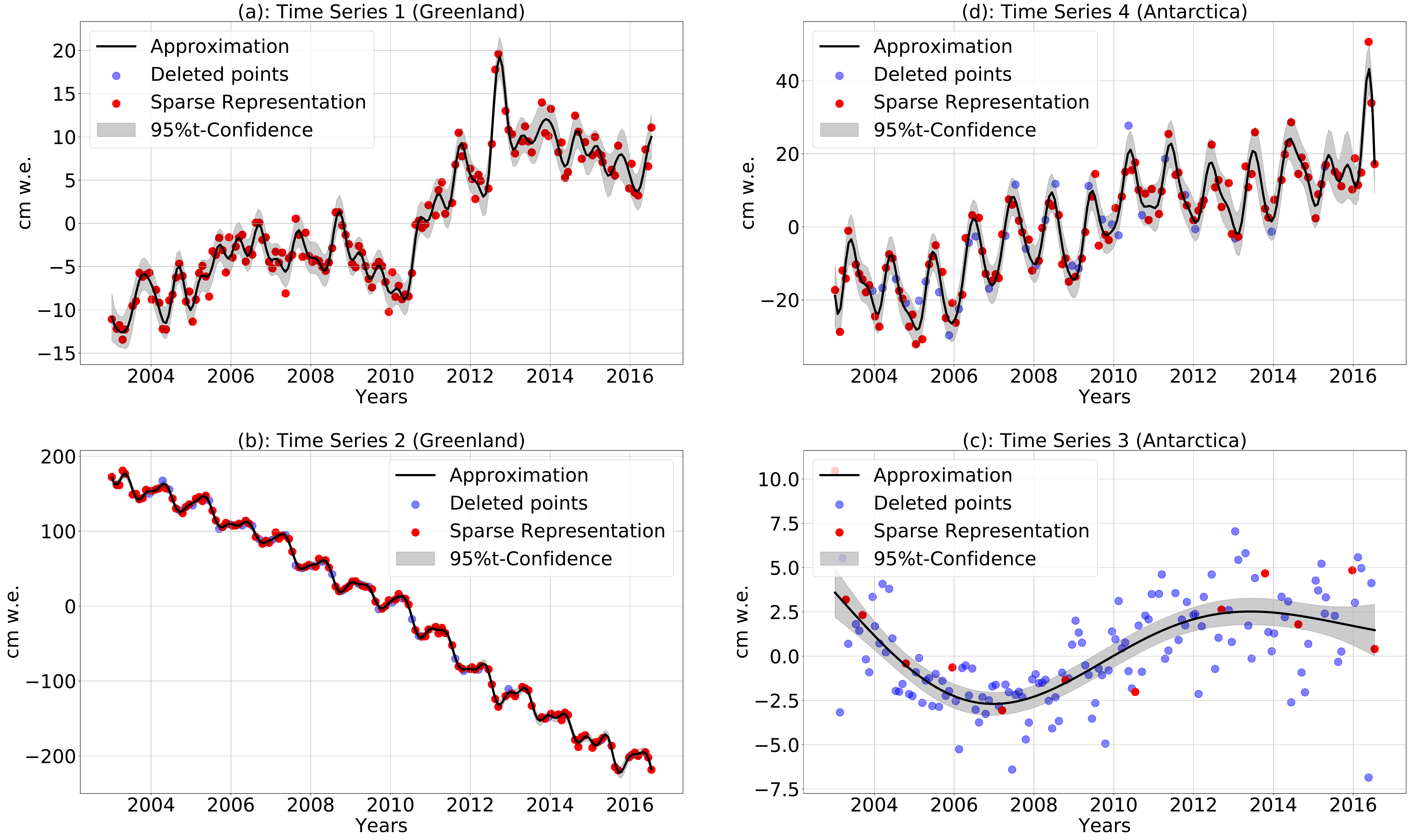}
\caption{Performance on the 4 time series from Greenland and Antarctic Icesheets (shown in Figure \ref{three_6}) with 95\%t-confidence intervals.}
\label{three_7}
\end{figure}

Here we consider the application of our approach on time series of cm. equivalents of water height . These time series were derived in \cite{luthcke2013antarctica} with the objective of studying changes in mass of ice around the globe (with regions divided broadly as ice sheets, ice shelves, land and water). For our purpose we consider 4 different time series here as shown in Figure \ref{three_6}. Here time series 1 and 2 are from Greenland showing the accumulation and ablation (melting) behavior respectively. Time series 3 and 4 show this behavior for Antarctic ice sheet. Figure \ref{three_7} then shows the approximation produced by our approach on these time series. 

For time series 1, the approach is able to capture a rich structure from previous noisy looking data. Here one other important thing to  note is that all the points were selected in the sparse representation to produce the best possible approximation. This further shows the nature of the approach to prefer good approximation over a simpler model. For time series 2, we have a clear periodicity in the structure of the data which is suitably captured by our approach. Moving further, time series 3 again shows one very important property of our approach. Here since the data is very noisy, hence the sparse representation chosen is very small as compared to the full dataset. This is because of the lack of structure in the data and hence a simpler model leads to a better generalization performance. In the last time series (time series 4), the algorithm again captures the periodicity in the data while choosing a subset of the dataset as the optimal sparse representation for generating approximations. The compression ratios and the optimal penalty order for the test time series are shown in Table \ref{tab:my_label2}.

\section{Conclusion}

In this paper, we presented a hierarchical regularization network based approach to generate sparse representations for noisy datasets with Generalized Cross Validation (GCV) for model selection and fitting. We  provided a detailed theoretical framework for the approach particularly studying the approximation behavior coupled with consistency and convergence.

These sparse representations were also shown to act as a model for the datasets to produce good approximations at previously un-observed data points. For testing the procedure, test datasets were picked from both simulations and observed real data repositories. On all of these datasets the approach was found to perform well providing an inference for the approximation with confidence intervals from the generated sparse representations.  

\begin{table}[]
    \centering
    \begin{tabular}{|c||*{4}{c|}}\hline
        \backslashbox{Feature}{Time Series}
        &\makebox[1.2em]{TS1}&\makebox[1.2em]{TS2}&\makebox[1.2em]{TS3}
        &\makebox[1.2em]{TS4}\\\hline\hline
        Convergence Scale ($t$) from 0 to 10 &10&9&1&9\\\hline
        Compression ratio at $s = t$ &0.00&0.22&0.91&0.22\\\hline
        Optimal Penalty at $s = t$ ($q^{opt}$) &2&2&1&1\\\hline
    \end{tabular}
    \caption{Performance details on the time series data (Figure \ref{three_6}). Here we have used the acronym TS for Time Series.}
    \label{tab:my_label2}
\end{table}

The next steps of this approach to sparse modeling with data reduction will be to extend the approach to very large datasets through efficient distributed implementations and intelligent data structures.  The quantification of model uncertainty could also be further improved by Bayesian sampling approaches that can effectively propagate the uncertainty of scale selection and inference of other parameters to the final model outcome. These  are expected to be a part of our future works.

\bibliographystyle{amsplain}
\bibliography{Shekhar}

\end{document}